\theoremstyle{definition}
\newtheorem{lemma}{Lemma}
\newtheorem{theorem}{Theorem}
\begin{document}  %缺失宏包会在此处报错

%\iffalse
\title{Motion-Enhanced Nonlocal Similarity Implicit Neural Representation for \\Infrared Small Target Detection}
\author {
    Pei Liu\textsuperscript{\rm 1},
    Yisi Luo\textsuperscript{\rm 1}\textsuperscript{*},
    Wenzhen Wang\textsuperscript{\rm 1},
    Jun-Jie Zhang\textsuperscript{\rm 2},
    Hui Qiao\textsuperscript{\rm 3},
    Xiangyong Cao\textsuperscript{\rm 1}\textsuperscript{*}\\
    1 Xi'an Jiaotong University, Xi'an 710000, China,\\
    2 Northwest Institute of Nuclear Technology, Xi'an 710000, China,\\
    3 China Telecom Shaanxi Branch, Xi'an 710000, China.
}

\maketitle
%\fi

\begin{abstract}
Infrared small target detection presents a significant challenge due to dynamic multi-frame scenarios and small target signatures in the infrared modality. Traditional low-rank plus sparse models often fail to capture dynamic backgrounds and global spatial-temporal correlations, which results in background leakage or target loss. In this work, we propose an unsupervised motion-enhanced nonlocal similarity implicit neural representation (INR) framework to address these challenges. Specifically, we first integrate motion estimation via optical flow to capture subtle target movements, and propose multi-frame fusion to enhance motion saliency. Second, we leverage nonlocal similarity to construct patch tensors with strong low-rank properties, and propose an innovative tensor decomposition-based INR model to represent the nonlocal patch tensor, effectively encoding both the nonlocal low-rankness and spatial-temporal correlations of background through continuous neural representations. An alternating direction method of multipliers (ADMM) is developed for the nonlocal INR model, which enjoys theoretical fixed-point convergence. Experimental results show that our approach robustly separates small targets from complex infrared backgrounds, outperforming state-of-the-art methods in detection accuracy and robustness.

\end{abstract}

\begin{IEEEkeywords}
Infrared Images, Implicit Neural Representation, Nonlocal Similarity, Motion Estimation, Small Target Detection.
\end{IEEEkeywords}

\section{Introduction}
% \IEEEPARstart{I}{nfrared} small target detection (IRSTD) is widely used in applications such as target warning, maritime rescue, and long-range search \cite{wu2022srcanet}. The small targets, typically less than 0.15\% of the image, exhibit weak features, while complex backgrounds introduce noise, making IRSTD a challenging task. Specifically, multi-frame IRSTD primarily encounters two major challenges: (1) \textit{\textbf{Dynamic Target Capture.}} The dynamic nature of small targets across frames limits single-frame methods. In contrast, multi-frame methods exploit spatial-temporal differences between targets and background for better detection, but still encounter target loss between frames, leaving scope for accuracy improvement. (2) \textit{\textbf{Accurate Background Estimation.}} Effective background modeling is crucial for target separation. Model-based methods~\cite{luo20234dst, xia2024separable} use domain knowledge to design effective regularizers to address this issue, which are interpretable and independent of data volume and labels but have limited representational capacity for complex scenes. Data-driven methods~\cite{zhang2024irprunedet, zhang2025irmamba}, using complex neural networks, provide stronger background representations but depend on labeled data and may struggle with out-of-distribution data.\par
% Therefore, effectively capturing dynamic targets remains a highly challenging problem.
\IEEEPARstart{I}{nfrared} small target detection (IRSTD) is widely used in target warning, maritime rescue, and long-range search \cite{wu2022srcanet, yan2023stdmanet}, etc. The small targets, typically less than 0.15\% of the entire image, exhibit weak features, while complex backgrounds introduce clutter, making IRSTD a challenging task. Specifically, multi-frame IRSTD primarily encounters two major challenges: (1) \textit{\textbf{Dynamic Target Capture.}} The dynamic features of small targets across frames limit the capability of single-frame methods. In contrast, existing multi-frame methods exploit spatial-temporal differences between the target and background to achieve better performance in detecting moving targets. However, these methods still encounter target loss between frames, leaving scope for accuracy improvement. (2) \textit{\textbf{Accurate Background Estimation.}} Effective background modeling is crucial for target separation. Among the two main approaches for IRSTD, existing model-based methods~\cite{gao2013infrared, luo2022imnn, ma2023weighted, liu2024infrared, luo2024clustering} utilize domain knowledge to design effective regularizers to address this issue. These methods are interpretable and independent of data volume and labels but have limited representational capacity for complex scenes. Data-driven methods~\cite{dai2021attentional, zhang2024irprunedet, chen2024tci, zhang2025irmamba, yang2025pinwheel}, which use complex neural networks, provide stronger background representations but depend on labeled data and may struggle with out-of-distribution data.\par
\begin{figure}[t] 
  \centering
  \includegraphics[width=0.98\linewidth,height=0.41\linewidth]{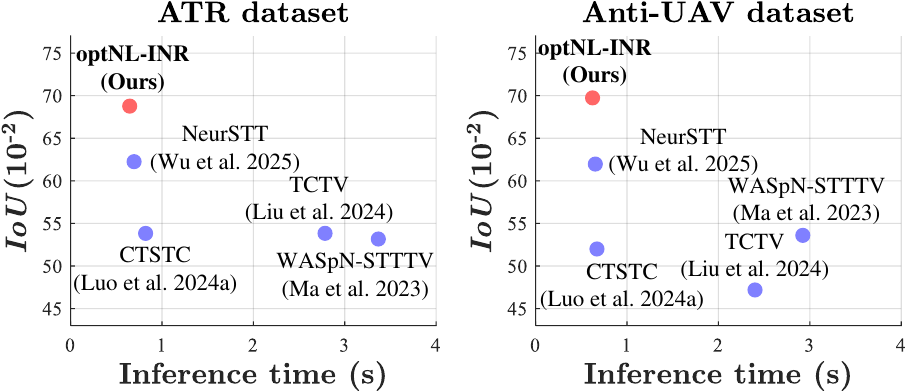}
  \vspace{-0cm} % 压缩标题间距
  \caption{Average detection performance ($IoU$) vs. inference time (s) of several unsupervised detectors for IRSTD.}
  \vspace{-0cm}
\end{figure}
To address these challenges, combining prior-based optimization models with deep networks for IRSTD is a promising research. The development of deep unfolding networks~\cite{RPCANet}, which unroll prior-guided models within the network, has enhanced network interpretability but still relies on labeled data. In contrast, unsupervised learning methods using deep networks for feature representation overcome this dependency. \cite{zhang3DSTPM} introduced a dataset-free deep prior expressed by the untrained 3-D spatial-temporal prior module (3DSTPM) for representation learning, which consumes massive parameters from 3DCNN. In contrast, implicit neural representations (INRs), as a powerful paradigm for unsupervised tensor data representation, use a lightweight multilayer perceptron (MLP) to map coordinate tensors to continuous space, modeling complex data and achieving strong performance in various vision tasks \cite{NeRF, SIREN, LRTFR}. For instance, \cite{LRTFR} introduced a low-rank constrained INR, which incorporating prior knowledge to regularize the solution and improve background recovery.\par 
Inspired by the pioneer works on INRs, we propose a nonlocal low-rank INR method for IRSTD, incorporating nonlocal similarity and dynamic multi-frame background representation through motion estimation to improve target separation. Specifically, the proposed motion-enhanced nonlocal similarity INR model (optNL-INR) combines both the strengths of model-based and data-driven approaches by leveraging sufficient interpretable domain knowledge (nonlocal low-rankness and motion information) and the expressiveness of INRs. First, using motion information between sequential frames, we compute the motion intensity of targets via optical flow, improving target location estimation and reducing target loss. Second, we construct a spatial-temporal tensor (STT) model by acquiring nonlocal similar patches, and approximate background patches using a novel nonlocal low-rank INR within the continuous domain, capturing nonlocal low-rankness and global spatial-temporal correlations of the multi-frame background. This enables more accurate background estimation and target separation. Finally, we use 3D total variation (3DTV) and alternating direction method of multipliers (ADMM) to ensure stable optimization. The proposed motion-enhanced nonlocal INR method for IRSTD is unsupervised and does not rely on labeled data.\par
In summary, the contributions of our work are four-fold:
\begin{itemize}
\item We propose a novel unsupervised optNL-INR method for IRSTD. The optNL-INR leverages a dynamic multi-frame optical flow fusion strategy to estimate subtle target motion, enabling robust motion saliency estimation that strengthens target localization.
\item We propose a nonlocal INR model with tensor Tucker decomposition to represent infrared backgrounds, which effectively captures both the nonlocal low-rankness and global spatial-temporal correlations of infrared backgrounds, thus improving small target detection.
\item We provide rigorous theoretical guarantees for the existence of the nonlocal INR model, the spatial-temporal correlation bound from the INR smoothness, and fixed-point convergence of the ADMM, ensuring solid theoretical foundations and reliability of our method.
\item Extensive experiments on infrared multi-frame datasets demonstrate the superior performance of our motion enhanced nonlocal INR method in detecting small targets in infrared images against state-of-the-art baselines.
\end{itemize}

\section{Related Works}
\subsection{Supervised Methods}
% In recent years, supervised deep learning (DL) methods for IRSTD have achieved significant improvements. Early attention networks~\cite{2021Attentional,wang2022interior} effectively alleviated small target feature loss in deep networks. Later, UNet-based architectures~\cite{li2023direction,zhong2024csan} enhanced multi-scale feature extraction, improving both global and local contrast, thus enhancing small target detection. Frequency domain methods~\cite{zhu2025towards,liu2025spatial} exploited spatial-frequency differences between targets and background, effectively suppressing background clutter while extracting target features. Visual Transformers, such as RKformer~\cite{zhang2022rkformer}, PBT~\cite{yang2024pbt}, and SCTransNet~\cite{yuan2024sctransnet}, leveraged global attention to capture contextual relationships between small targets and background, overcoming CNN's local receptive field limitations. Additionally, advances in deep unfolding networks~\cite{RPCANet,DUSRNet} achieved improvements for IRSTD by unfolding a prior-guided model into a network. However, these data-driven methods rely on supervised training with large labeled datasets and suffer from limited interpretability due to their black-box structure. As compared, the proposed unsupervised optNL-INR is more generalizable across scenes and offers better interpretability.\par
In recent years, supervised deep learning (DL) methods for IRSTD have achieved significant performance improvements. The early-developed attention networks~\cite{2021Attentional,wang2022interior} effectively alleviated small target feature loss in deep networks. Later, several UNet-based architectures~\cite{li2023direction, zhong2024csan} enhanced multi-scale feature extraction, significantly enhancing both global and local contrast information, thereby improving small target detection capability. Furthermore, frequency domain-based methods~\cite{zhu2025towards, liu2025spatial} exploited spatial-frequency differences between the target and background, effectively suppressing background clutter while extracting target features. With the development of visual Transformers, methods such as RKformer~\cite{zhang2022rkformer}, PBT~\cite{yang2024pbt}, and SCTransNet~\cite{yuan2024sctransnet} leveraged the global attention mechanism of Transformers to capture contextual relationships between small targets and complex backgrounds, overcoming CNN's local receptive field limitations. Additionally, advances in deep unfolding networks have achieved remarkable progress for IRSTD by unfolding a prior-guided optimization model into a network. \cite{RPCANet} introduced a theory-guided neural network based on the RPCA model for IRSTD. \cite{Deep-LSP-Net} proposed Deep-LSP-Net, a patch-based network that decomposes infrared images into low-rank backgrounds and sparse targets through patch-based processing, and \cite{DUSRNet} developed an end-to-end framework integrating sparse regularization with adaptive background estimation and target extraction modules. Although these data-driven methods improve performance, they rely on supervised training with large labeled datasets and suffer from limited interpretability due to their black-box structure. As compared, the proposed unsupervised optNL-INR would be more generalizable across different scenes and enjoys better interpretability. \par

\subsection{Unsupervised Methods}
Traditional unsupervised methods initially evolved from classical background filtering methods~\cite{rivest1996detection}. Subsequently, human visual system \cite{han2020infrared,lu2023infrared} and low-rank sparse decomposition (LRSD) were introduced. The infrared patch image (IPI) model \cite{gao2013infrared} based on LRSD has inspired a series of methods. \cite{dai2017reweighted} introduced the reweighted IPI model, and \cite{kong2021infrared} proposed nonconvex tensor fibered rank approximation for IRSTD. Due to the limited availability of spatial information, spatial-temporal tensor (STT) models have been proposed for multi-frame IRSTD. \cite{luo2022imnn} proposed a multi-mode nuclear norm joint local weighted entropy contrast (IMNN-LWEC) method via optimization-based decomposition of the spatial-temporal tensor. \cite{ma2023weighted} proposed a weighted adaptive Schatten p-norm and spatial-temporal tensor transpose variation (WASpN-STTTV) model for maritime IRSTD, boosting target detection performance in non-uniform sea wave backgrounds. \cite{liu2023infrared} proposed a nonconvex tensor Tucker decomposition model with factor prior for IRSTD, addressing limitations of predefined rank selection and enhancing detection in complex scenes. \cite{luo2024clustering} proposed a clustering and tracking-guided spatial-temporal prediction completion model (CTSTC) in the high-frequency domain, integrating an improved k-means algorithm and Bayesian tracking regularization to address target-background separation and real-time detection in complex infrared scenes. Additionally, implicit neural representations (INRs) have been introduced for unsupervised tensor data representation. \cite{wu2025neural} proposed a neural STT model based on INR for IRSTD. These unsupervised methods combine theoretical interpretability with practical performance. Compared to these methods, we propose a novel motion-enhanced nonlocal INR model, which incorporates nonlocal similarity and motion information, leveraging INR’s expressiveness for superior detection accuracy.

\section{Notations and Preliminaries}
\noindent {\bf Notations} The scalar, vector, matrix, and tensor are denoted as $x$, ${\bf x}$, ${\bf X}$, and $\mathcal{X}$. Given a tensor $\mathcal{X}\in \mathbb{R}^{n_1 \times n_2 \times n_3}$, the element at position $(i, j, k)$ in $\mathcal{X}$ is denoted as $\mathcal{X}(i, j, k)$. The Frobenius norm of a tensor $\mathcal{X} \in \mathbb{R}^{n_1 \times n_2 \times n_3}$ is defined as $\|\mathcal{X}\|_{F} = \sqrt{\sum_{i, j, k} \mathcal{X}(i, j, k)^2}$.
The mode-$i$ ($i=1,2,3$) unfolding operator of a tensor $\mathcal{X} \in \mathbb{R}^{n_1 \times n_2 \times n_3}$ results in a matrix denoted by $
{\tt unfold}_{i}(\mathcal{X}) = {\bf X}_{(i)} \in \mathbb{R}^{n_{i} \times \prod_{j \neq i} n_{j}}$. The operator ${\tt fold}_{i}(\cdot)$ is defined as the inverse of ${\tt unfold}_{i}(\cdot)$. The mode-$i$ product between a tensor $\mathcal{X}\in \mathbb{R}^{n_1 \times n_2 \times n_3}$ with a matrix ${\bf A}\in \mathbb{R}^{n\times n_i}$ is defined as $
\mathcal{X} \times_{i} {\bf A} = {\tt fold}_{i}\left({\bf A} {\bf X}_{(i)}\right)$. 
The Tucker rank of a tensor
\begin{math}
  \mathcal{X} \in \mathbb{R}^{n_1 \times n_2 \times\cdots\times n_N}
\end{math} is defined as $\mathrm{rank}_{T}(\mathcal{X}) = \left(\mathrm{rank}({\bf X}_{(1)}), \mathrm{rank}({\bf X}_{(2)}), \cdots,\mathrm{rank}({\bf X}_{(N)})\right).$
\begin{lemma}[Tensor Tucker decomposition
\cite{Tucker}]\label{Tucker}
For a tensor $\mathcal{X} \in \mathbb{R}^{n_1 \times n_2 \times\cdots\times n_N}$ with Tucker rank $\mathrm{rank}_{T}(\mathcal{X}) = (r_1,\cdots,r_N)$, there exist a core tensor $\mathcal{C} \in \mathbb{R}^{r_1 \times\cdots \times r_N}$ and $N$ factor matrices ${\bf U}_1 \in \mathbb{R}^{n_1 \times r_1},\cdots,{\bf U}_N \in \mathbb{R}^{n_N \times r_N}$ such that $\mathcal{X}$ can be represented by $\mathcal{X} = \mathcal{C}\times_{1} {\bf U}_1\times_2\cdots\times_{N} {\bf U}_N .$
\end{lemma}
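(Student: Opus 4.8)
The plan is to prove the statement constructively, building the factor matrices from the mode-$i$ unfoldings and then defining the core tensor by projecting $\mathcal{X}$ onto the resulting subspaces. First I would fix each mode $i \in \{1, \ldots, N\}$ and use the definition of the Tucker rank, namely $r_i = \mathrm{rank}(\mathbf{X}_{(i)})$, to obtain an orthonormal basis of the column space of $\mathbf{X}_{(i)}$. Concretely, taking a thin singular value decomposition $\mathbf{X}_{(i)} = \mathbf{U}_i \Sigma_i \mathbf{V}_i^\top$ with $\mathbf{U}_i \in \mathbb{R}^{n_i \times r_i}$ collecting the left singular vectors associated with the $r_i$ nonzero singular values gives a matrix satisfying $\mathbf{U}_i^\top \mathbf{U}_i = \mathbf{I}_{r_i}$ whose columns span the range of $\mathbf{X}_{(i)}$.

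The key observation I would establish next is a projection identity: since the columns of $\mathbf{U}_i$ span the column space of $\mathbf{X}_{(i)}$, the orthogonal projector $\mathbf{U}_i \mathbf{U}_i^\top$ acts as the identity on that space, so $\mathbf{U}_i \mathbf{U}_i^\top \mathbf{X}_{(i)} = \mathbf{X}_{(i)}$. Rewriting this in tensor form through the definition of the mode-$i$ product yields $\mathcal{X} \times_i (\mathbf{U}_i \mathbf{U}_i^\top) = \mathcal{X}$ for every $i$. I would then invoke two standard properties of mode products — the composition rule $\mathcal{X} \times_i \mathbf{A} \times_i \mathbf{B} = \mathcal{X} \times_i (\mathbf{B}\mathbf{A})$ within a single mode, and the commutativity $\mathcal{X} \times_i \mathbf{A} \times_j \mathbf{B} = \mathcal{X} \times_j \mathbf{B} \times_i \mathbf{A}$ across distinct modes $i \neq j$ — to chain these per-mode identities together and obtain $\mathcal{X} = \mathcal{X} \times_1 (\mathbf{U}_1 \mathbf{U}_1^\top) \times_2 \cdots \times_N (\mathbf{U}_N \mathbf{U}_N^\top)$.

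To finish, I would define the core tensor as $\mathcal{C} = \mathcal{X} \times_1 \mathbf{U}_1^\top \times_2 \cdots \times_N \mathbf{U}_N^\top$; since each $\mathbf{U}_i^\top$ maps the $i$-th dimension from $n_i$ to $r_i$, the core lives in $\mathbb{R}^{r_1 \times \cdots \times r_N}$ as required. Substituting this definition and repeatedly applying the composition rule — which merges the $\mathbf{U}_i^\top$ appearing inside $\mathcal{C}$ with the outer $\mathbf{U}_i$ into the product $\mathbf{U}_i \mathbf{U}_i^\top$ in each mode — gives $\mathcal{C} \times_1 \mathbf{U}_1 \times_2 \cdots \times_N \mathbf{U}_N = \mathcal{X} \times_1 (\mathbf{U}_1 \mathbf{U}_1^\top) \times_2 \cdots \times_N (\mathbf{U}_N \mathbf{U}_N^\top) = \mathcal{X}$, which is exactly the claimed decomposition.

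The routine algebra — the SVD construction and the verification of dimensions — is straightforward; the step requiring the most care is the bookkeeping of the mode-product identities, specifically justifying that the single-mode projector identities can be composed across all $N$ modes independently of the order of multiplication. This hinges on the commutativity of mode products in distinct modes, which I would either cite as a standard fact or verify once at the level of unfoldings. Everything else then follows by direct substitution.
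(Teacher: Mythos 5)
Your proof is correct. Note, however, that the paper itself offers no proof of this lemma: it is stated as a classical result and attributed to Tucker's original work via citation, so there is no in-paper argument to compare against. Your constructive route is the standard higher-order SVD existence argument (take a thin SVD of each unfolding $\mathbf{X}_{(i)}$, verify the projector identity $\mathcal{X}\times_i(\mathbf{U}_i\mathbf{U}_i^\top)=\mathcal{X}$, and define the core as $\mathcal{C}=\mathcal{X}\times_1\mathbf{U}_1^\top\times_2\cdots\times_N\mathbf{U}_N^\top$), and it is sound; it even yields more than the lemma asks for, namely factor matrices with orthonormal columns. One simplification worth noting: the cross-mode commutativity bookkeeping you flag as the delicate step is not actually needed for the chaining. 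Since each projector \emph{fixes} the tensor --- $\mathcal{X}\times_i(\mathbf{U}_i\mathbf{U}_i^\top)=\mathcal{X}$ exactly, not merely up to a subspace containment --- a one-mode-at-a-time induction suffices: after applying the mode-$1$ projector the tensor is still $\mathcal{X}$, so the mode-$2$ projector identity applies verbatim to it, and so on. Commutativity of mode products across distinct modes is only needed in the final substitution step, where you pair each outer $\mathbf{U}_i$ with the $\mathbf{U}_i^\top$ inside $\mathcal{C}$, and there it is a standard fact easily checked on unfoldings, as you propose.
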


\section{Methodology of optNL-INR}
\subsection{Motion Enhancement for Infrared Images}
Optimization-based IRSTD approaches leverage the intrinsic low-rank structure of backgrounds and the sparsity of targets under the robust principal component analysis (RPCA) framework \cite{dai2017reweighted} to represent STT, which is formulated as:
\begin{equation}\label{Init_RPCA_model}
\begin{aligned}
    \mathcal{D} =\mathcal{B} +\mathcal{T} +\mathcal{N},
\end{aligned}    
\end{equation}
where $\mathcal{D} \in \mathbb{R} ^{n_1\times n_2\times n_3}$ is constructed by the input infrared image sequences ${\bf I}_1, {\bf I}_2, ..., {\bf I}_{n_3}\in \mathbb{R} ^{n_1\times n_2}$. $\mathcal{B} , \mathcal{T} , \mathcal{N} \in \mathbb{R} ^{n_1\times n_2\times n_3}$ are the background, target, and noise tensors, respectively. The overall flowchart of our proposed optNL-INR model for IRSTD is shown in Figure \ref{fig:Frame}. \par 
To effectively utilize motion information, we leverage the infrared small target characteristic and the motion intensity of pixel points in continuous scenes to capture subtle target motion in $\mathcal{D}$. We utilize the Farneback optical flow \cite{farneback2003two} to determine the motion intensity and direction of moving targets, effectively extracting motion information in multi-frame images.
\begin{figure}[t]
  \centering
  % \vspace{-2mm}
  \includegraphics[width=0.95\linewidth]{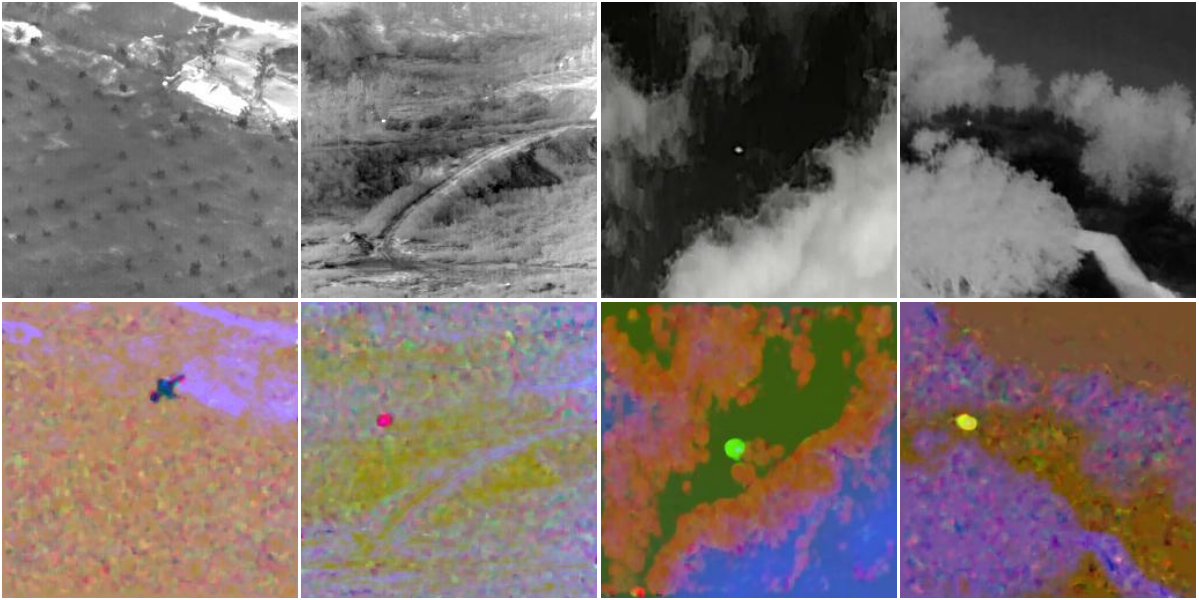}
  \vspace{-0mm} % 压缩标题间距
  \caption{Optical flow diagrams estimated for multiple infrared image scenes. The bottom double-channel color images represent the intensities of motion matrices $({\bf D}_x,{\bf D}_y)$.}
  \label{fig:optical_flow}
  \vspace{-0cm}
\end{figure}

\textbf{\textit{Step} 1.} (Calculate optical flow map) Given a pair of input images from adjacent frames, \({\bf I}_1, {\bf I}_2 \in \mathbb{R}^{n_1 \times n_2}\), we apply the Farneback method to calculate the motion vector for each pixel point. This results in the motion matrices \(\mathbf{D}_x \in \mathbb{R}^{n_1 \times n_2}\) and \(\mathbf{D}_y \in \mathbb{R}^{n_1 \times n_2}\), which represent the motion direction and intensity of all pixel points in the horizontal and vertical directions, respectively. %The matrices $\mathbf{d}_x$ and $\mathbf{d}_y$ are then combined into the motion vector field tensor $\mathbf{v}=\left[ \mathbf{d}_x,\mathbf{d}_y \right]\in \mathbb{R} ^{n_1\times n_2\times 2}$.
%The actual motion direction and intensity of pixels are determined by the separated motion directions and intensities in the horizontal and vertical directions, and 
The (absolute) optical flow map for each frame \(f = 1, 2, \dots, n_3\) is obtained by ${\bf M}_f=\sqrt{\left| {\bf D}_x \right|^2+\left| {\bf D}_y \right|^2}$ (element-wise operations).

\textbf{\textit{Step} 2.} (Dynamic multi-frame fusion) To enhance the robustness of motion estimation and mitigate transient outliers, we fuse the optical flow maps from both the current frame $f$ and its previous $k$ frames. Specifically, we calculate the maximum intensity value $m$ of the current frame's optical flow map ${\bf M}_f$, and use $m$ as the motion confidence to construct a dynamic multi-frame fusion model: $\mathbf{M}_{f}^{F}=\alpha \mathbf{M}_f+\left( 1-\alpha \right) \left( \sum_{i=1}^k\frac{1}{k}{\mathbf{M}_{f-i}} \right) ,\alpha =\frac{m}{m+\beta}$. Here, $\beta$ is a tuning parameter and is set to 0.1. {When the target's motion intensity $m$ in the current frame is large, $\alpha$ approaches 1, resulting in complete reliance on the current frame. Conversely, when the target motion intensity $m$ in the current frame is low, $\alpha$ approaches 0, leading to a greater reliance on historical frames.} We can obtain all the optical flow maps ${\bf M}_f^F\left( f=1,2,\dots ,n_3 \right)$ along the mode-3 direction, and by stacking, we get the final fused optical flow tensor $
    \mathcal{M} ^F={\rm stack}\left( {\bf M}_{1}^{F},...,{\bf M}_{f}^{F},...,{\bf M}_{n_3}^{F} \right)\in \mathbb{R} ^{n_1\times n_2\times n_3}$.
% where ${\bf M}_{f}^{F}\in \mathbb{R} ^{n_1\times n_2}$ denotes the fused optical flow map at the frame $f$, $k$ represents the number of previous frames, and $\mathcal{M} ^F\in \mathbb{R} ^{n_1\times n_2\times n_3}$ is fused optical flow tensor map.

\textbf{\textit{Step} 3.} (Motion enhancement) The weight map $\mathcal{M} ^F$ exhibits higher magnitudes in regions with salient motion targets (see Figure \ref{fig:optical_flow}). By performing weighted sum with the original image $\mathcal{D}$, the response intensity of target regions will be enhanced, thereby improving detection accuracy. %Specifically, the motion saliency features of moving targets are enhanced through spatial modulation using a weight map, while background noise interference is suppressed. 
The motion enhancement model is formulated by $\mathcal{X} =\left( 1-\gamma \right) \mathcal{D} +\gamma \mathcal{M} ^F,$ where $\mathcal{D}$ is the original infrared tensor, $\mathcal{X}$ is the motion-enhanced tensor, and $\gamma$ is a balancing factor. The moving small targets in $\cal X$ are robustly strengthened, thereby enhancing detection accuracy.
%% test 3
\subsection{Nonlocal Grouping for STT}
The background of infrared images usually exhibits strong {\it nonlocal} low-rankness. 
%To avoid redundancy, we employ a nonlocal sliding window strategy.
%To avoid redundancy, we adopt a non-overlapping patches split strategy.
To capture this property, we conduct patch tensor splitting on the motion-enhanced tensor $\mathcal{X} \in \mathbb{R}^{n_1 \times n_2 \times n_3}$ to obtain a series of small basic patch tensors $\mathcal{P} \in \mathbb{R} ^{p\times p\times n_3}$. Specifically, we slide a three-dimensional window over $\mathcal{X}$. The size of the window is $p\times p$, and the depth is $n_3$. To avoid redundancy, {we adopt a non-overlapping patch split strategy \cite{wang2021infrared, luo2022imnn}}, with the sliding step equal to $p$. Thus, we can obtain $L=\left( n_1/p \right) \left( n_2/p \right)$ basic patch tensors to compose a basic patch tensors set $\left\{ \mathcal{P}_l\in\mathbb{R} ^{p\times p\times n_3} \right\} _{l=1}^{L}$. We conduct the same splitting for a coarse background tensor $\mathcal{X}'$ to obtain coarse patches $\left\{ \mathcal{P}'_l\in\mathbb{R} ^{p\times p\times n_3} \right\} _{l=1}^{L}$, where $\mathcal{X}'$ is obtained by leveraging the low-rank tensor function representation (LRTFR) \cite{LRTFR}. This is achieved by optimizing the following model:
\begin{equation}\label{eq:LRTFR}
\begin{split}
&\min_{{\mathcal C},{\theta_1,\theta_2,\theta_3}}\sum_{(i,j,k)}\left|f_\Theta(i,j,k)-{\mathcal X}(i,j,k)\right|,\\&f_\Theta(i,j,k)={\mathcal C}\times_1f_{\theta_1}(i)\times_2f_{\theta_2}(j)\times_3f_{\theta_3}(k),
\end{split}
\end{equation}
where $f_\Theta(i,j,k)$ denotes the LRTFR \cite{LRTFR} that represents the tensor ${\mathcal X}$ with a Tucker tensor decomposition parameterized by INRs. The loss in \eqref{eq:LRTFR} is equivalent to using an $\ell_1$-norm loss conditioned on the infrared images ${\mathcal X}$ to optimize a low-rank tensor model, hence the learned tensor ${\mathcal X}'$ defined by ${\mathcal X}'(i,j,k)=f_\Theta(i,j,k)$ would contain the low-rank background information in the infrared images. We use this coarse background for nonlocal search.\par 
\begin{figure*}[!ht]
  \centering
  \includegraphics[width=0.95\linewidth]{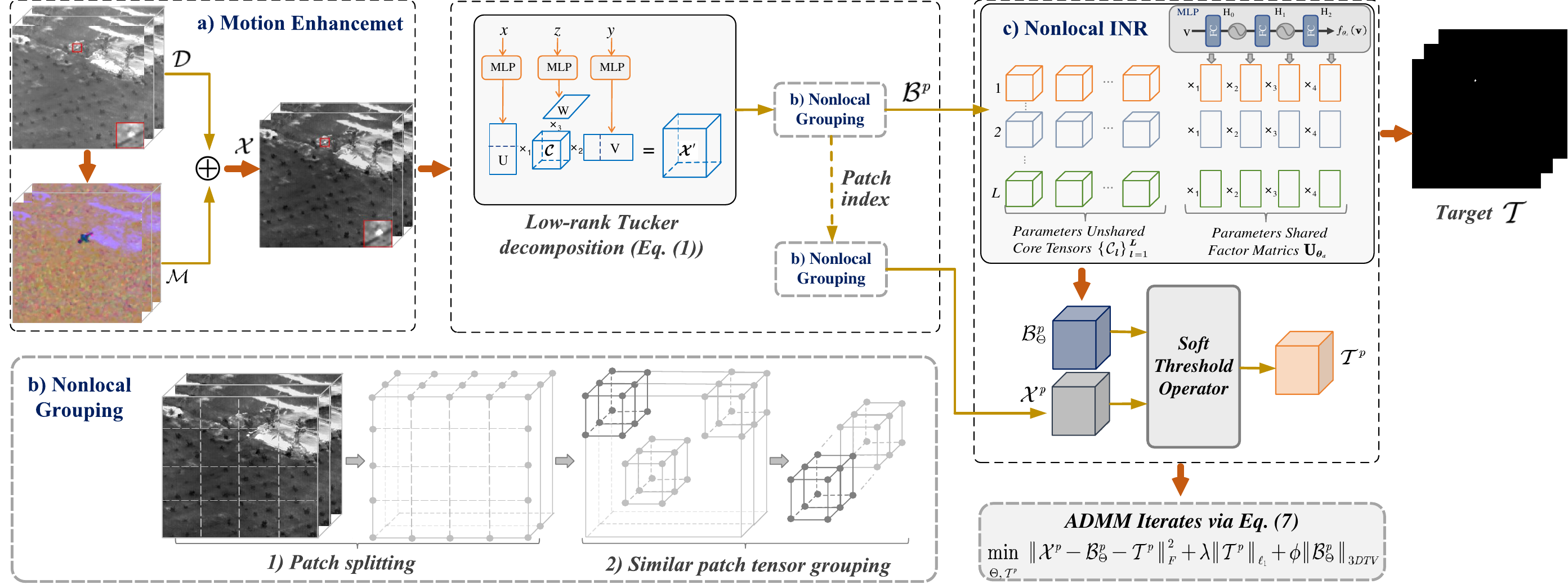}
  \vspace{-0mm} % 压缩标题间距
  \caption{Overall flowchart of our optNL-INR for IRSTD. a) Motion enhancement integrates the original image $\mathcal{D}$ with the optical flow map $\mathcal{M}$ to generate an enhanced image $\mathcal{X}$. b) Nonlocal grouping employs patch split and grouping to obtain nonlocal similar STT models $\mathcal{X}^p$ and $\mathcal{B}^p$. c) Nonlocal INR represents the background $\mathcal{B}^p$ in a continuous domain to obtain the representation $\mathcal{B}_{\Theta}^{p}$. The separated target patch tensor $\mathcal{T}^p$ is computed via ADMM and reconstructed into the target image $\mathcal{T}$.}
  \label{fig:Frame}
  \vspace{-0cm}
\end{figure*}
Specifically, we conduct nonlocal patch searching using the coarse background tensor ${\mathcal X}'$. For each non-overlapping patch tensor $\mathcal{P}' _{l}$, we aim to find its top-\textit{S} similar patch tensors in the basic patch tensors set $\left\{ \mathcal{P}'_l \right\} _{l=1}^{L}$. The distance between any two patch tensors is calculated by the Euclidean distance between them. Hence, for each patch $\mathcal{P}' _{l}$, we will find its top-$S$ similar patches in $\left\{ \mathcal{P}'_l \right\} _{l=1}^{L}$, and we denote the index set of all nonlocal similar patches of $\mathcal{P}' _{l}$ as $I_l$, which stores all the similar patch indexes of $\mathcal{P}' _{l}$. Then, for each motion-enhanced patch tensor $\mathcal{P}_{l}$ from $\cal X$, we concat it with other $S$ patch tensors in $\{\mathcal{P}_{l}\}_{l=1}^L$ by using the patch indexes $I_l$. These patches would share similar background information due to the nonlocal similarity. Such concat leads to a nonlocal tensor of size $p\times p\times n_3\times (S+1)$ for each $\mathcal{P}_{l}$.
We then concat all the nonlocal tensors to aggregate them into a five-dimensional tensor $\mathcal{X}^p\in \mathbb{R} ^{L\times p\times p\times n_3\times \left( S+1 \right)}$, where the $\left(l,:,:,:,: \right)$-th sub-tensor stores the $l$-th key patch $\mathcal{P}_{l}$ and its $S$ nonlocal similar patches (in terms of background similarity) across the spatial-temporal domain. This sub-tensor $\mathcal{X}^p(l,:,:,:,:)$ would enjoy strong low-rankness due to its composition of nonlocal similar patches across the infrared images. We illustrate the nonlocal searching algorithm in Algorithm \ref{alg:nonlocal}.
{\begin{algorithm}[t]
\caption{Nonlocal grouping for infrared images}
\label{alg:nonlocal}
\begin{algorithmic}[1]
\Require The motion-enhanced infrared images $\mathcal X$;
\Ensure The nonlocal similarity aggregated tensor ${\mathcal X}^p$;
\State Estimate coarse background ${\mathcal X}'$ using LRTFR \eqref{eq:LRTFR};
\State Split ${\mathcal X}$ and ${\mathcal X}'$ into non-overlapping patches $\{{\mathcal P}_l\in{\mathbb R}^{p\times p\times n_3}\}_{l=1}^L$ and $\{{\mathcal P}'_l\in{\mathbb R}^{p\times p\times n_3}\}_{l=1}^L$;
\State For each patch ${\mathcal P}'_l$, search the top-$S$ nonlocal similar patches in $\{{\mathcal P}'_l\}_{l=1}^L$ using Euclidean distance, denote $I_l$ the corresponding nonlocal patch index set;
\State Group each patch ${\mathcal P}_l$ with $S$ patches in $\{{\mathcal P}_l\}_{l=1}^L$ based on the patch index $I_l$ to form a nonlocal tensor $p\times p\times n_3\times (S+1)$;
\State Concat these nonlocal tensors to obtain the aggregated tensor ${\mathcal X}^p\in{\mathbb R}^{L\times p\times p\times n_3\times (S+1)}$;
%\RETURN The motion-enhanced infrared image tensor $\mathcal X$;
\end{algorithmic}
% \vspace{-5pt} % 上移算法环境
\end{algorithm}}

\subsection{Nonlocal INR for Background STT Modeling}
% \subsubsection{IRSTD Model}
We conduct the IRSTD fully based on the nonlocal patch dimension of ${\mathcal X}^p$. The RPCA framework for IRSTD can be re-formulated in terms of nonlocal patches modeling:
\begin{equation}\label{patch_RPCA_model_1}
\begin{aligned}
    \mathcal{X}^p=\mathcal{B}^p+\mathcal{T}^p +\mathcal{N}^p,
\end{aligned}    
\end{equation}
where $\mathcal{X} ^p,\mathcal{B} ^p,\mathcal{T} ^p,\mathcal{N} ^p\in \mathbb{R} ^{L\times p\times p\times n_3\times \left( S+1 \right)}$ represent the motion-enhanced, background, target, and noise patch tensors, respectively. By leveraging the nonlocal low-rankness of the background and the sparsity of targets, the IRSTD can be transformed into an optimization problem to minimize background rank and target sparsity. The noise term $\mathcal{N}^p$ can be transformed into an $F$-norm constraint. Therefore, the optimization problem is:
\begin{equation}\label{patch_RPCA_model_2}
\begin{aligned}
    \underset{\mathcal{B} ^p,\mathcal{T} ^p}{\min}\,\,\left\| \mathcal{X} ^p-\mathcal{B} ^p-\mathcal{T} ^p \right\| _F^2+rank\left( \mathcal{B} ^p \right) +\lambda \left\| \mathcal{T} ^p \right\|_{\ell_1},
\end{aligned}    
\end{equation}
where $\left\| \cdot \right\|_{\ell_1}$ denotes the $\ell_1$-norm to enforce sparsity. {To encode nonlocal low-rankness of background, we introduce a novel nonlocal INR method, which simultaneously preserves nonlocal low-rankness and captures spatial-temporal correlations of STT by INR smoothness \cite{LRTFR}.}\par 
Specifically, the nonlocal background patch tensor ${\mathcal B}^p$ holds strong low-rankness, hence we use the low-rank Tucker decomposition as introduced in Lemma \ref{Tucker} to model its nonlocal low-rankness. To better capture spatial-temporal correlations of STT, we propose to use INRs \cite{SIREN,LRTFR} to generate the factor matrices of the nonlocal Tucker decomposition model. The INRs hold inherent global Lipchitz smoothness \cite{LRTFR}, and hence could implicitly capture the spatial-temporal correlations of STT; see Theorem \ref{the_INR_smooth}. Formally, the nonlocal INR model for representing background STT ${\mathcal B}^p$ is defined as
\begin{equation}\label{NL-INR}
\begin{split}
&{\mathcal B}^p_\Theta(l,:,:,:,:):={\mathcal C}_l\times_1
{\bf U}_{\theta_1}\times_2
{\bf U}_{\theta_2}\times_3
{\bf U}_{\theta_3}\times_4
{\bf U}_{\theta_4},\;\\
&\qquad \qquad \qquad \quad (l=1,\cdots,L),\\
&{\bf U}_{\theta_d}(i_d,:)=f_{\theta_d}(i_d)\in{\mathbb R}^{r_d},\;(i_d=1,\cdots,n_d,\;d=1,2,3,4),
\end{split}
\end{equation}
where ${\mathcal B}^p_\Theta\in{\mathbb R}^{L\times n_1\times n_2\times n_3\times n_4}$ ($n_1=n_2=p,n_4=S+1$) denotes the learned background patch tensor parameterized by the nonlocal INR with parameters $\Theta:=\{\{{\mathcal C}_l\}_{l=1}^L,\theta_1,\theta_2,\theta_3,\theta_4\}$. The $\{{\mathcal C}_l\}_{l=1}^L$ are $L$ unshared core tensors for the $L$ nonlocal groups. The ${\bf U}_{\theta_d}\in{\mathbb R}^{n_d\times r_d}$ is the shared factor matrix across all nonlocal groups $l=1,\cdots,L$. The shared factor matrix ${\bf U}_{\theta_d}$ is generated by the INR $f_{\theta_d}(\cdot):{\mathbb R}\rightarrow{\mathbb R}^{r_d}$. Such an INR is an MLP with sine activation functions \cite{SIREN}, which maps a tensor index $i_d$ to the corresponding factor vector ${\bf U}_{\theta_d}(i_d,:)$: 
\begin{equation}\label{U_model}
\begin{aligned}
{\bf U}_{\theta_d}(i_d,:)=f_{\theta_{d}}(i_d) = {\bf H}_M(\sigma({\bf H}_{M-1} \cdots \sigma({\bf H}_{1}i_d))),
\end{aligned}
\end{equation}
where $\theta_{d}\triangleq\{{\bf H}_m\}_{m=1}^M$ are learnable weights of the INR and $\sigma(\cdot)\triangleq\sin(\omega\cdot)$ denotes the sinusoidal activation function with a frequency parameter $\omega$ \cite{SIREN}. The strong continuous representation ability of INR makes it effective for generating the nonlocal Tucker model.\par 
The nonlocal INR implicitly exploits the nonlocal low-rankness underlying STT through the tensor decomposition, and hence serves as an effective rank regularization $rank({\mathcal B}^p)$ in \eqref{patch_RPCA_model_2}. The implicit regularization brought by INRs further improves spatial-temporal correlation excavation. Based on the nonlocal INR modeling \eqref{NL-INR}, the proposed IRSTD optimization model is formulated as
\begin{equation}\label{nonlocal_INR_model}
\begin{aligned}
    \underset{\Theta,\mathcal{T} ^p}{\min}\,\,\left\| \mathcal{X} ^p-\mathcal{B} ^p_\Theta-\mathcal{T} ^p \right\| _F^2+\lambda \left\| \mathcal{T} ^p \right\|_{\ell_1},
\end{aligned}    
\end{equation}
where the optimization parameters include the nonloal INR parameters $\Theta$ and the sparse target $\mathcal{T} ^p$.
\subsection{Theoretical Validation for Nonlocal INR}
We provide rigorous theoretical analysis for optNL-INR, including: (1) the existence of the nonlocal INR model that guarantees the representation ability of the model to fully capture the background for IRSTD (Theorem \ref{the_INR}); (2) the spatial-temporal correlation bound that reveals the spatial-temporal smoothness constraint brought by the INR model (Theorem \ref{the_INR_smooth}); and (3) the ADMM fixed-point convergence that guarantees the numerical stability of the proposed IRSTD algorithm (Lemma \ref{lemma_convergence}).
\begin{theorem}[Existence of the nonlocal INR model]\label{the_INR}
Suppose that the factor INR $f_{\theta_d}(\cdot)$ is an universal approximator over any functions in $\{f:{\mathbb R}\rightarrow{\mathbb R}^{r_d}\}$ ($d=1,2,3,4$), then provided that the background nonlocal tensor ${\mathcal B}^p$ is of low-rank structures ${\rm rank}_T(\mathcal{B}^p(l,:,:,:,:))\leq(r_1,r_2,r_3,r_4)$ for any $l$, then there must exist a group of parameters $\Theta:=\{\{{\mathcal C}_l\}_{l=1}^L,\theta_1,\theta_2,\theta_3,\theta_4\}$ that satisfy $\mathcal{B}^p(l,i_1,i_2,i_3,i_4)={\mathcal C}_l\times_1f_{\theta_1}(i_1)\times_2f_{\theta_2}(i_2)\times_3f_{\theta_3}(i_3)\times_4f_{\theta_4}(i_4)$.
\end{theorem}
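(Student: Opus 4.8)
The plan is to reduce the statement to a purely algebraic claim about a \emph{shared-factor} Tucker decomposition and then discharge it using Lemma~\ref{Tucker} together with the universal approximation hypothesis. First I would observe that each factor INR only needs to produce the correct values at the finitely many integer indices $i_d\in\{1,\dots,n_d\}$. Hence, once target factor matrices ${\bf U}_1^\star,\dots,{\bf U}_4^\star$ with ${\bf U}_d^\star\in{\mathbb R}^{n_d\times r_d}$ are fixed, the assumption that $f_{\theta_d}$ is a universal approximator over $\{f:{\mathbb R}\to{\mathbb R}^{r_d}\}$ lets me choose $\theta_d$ so that $f_{\theta_d}(i_d)={\bf U}_d^\star(i_d,:)$ for every $i_d$ (exact interpolation of finitely many distinct points). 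This removes the INRs from the problem: it remains to exhibit shared factor matrices and per-group cores $\{{\mathcal C}_l\}$ that realize every slice ${\mathcal B}^p(l,:,:,:,:)$ exactly.

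Next I would build these factors from the low-rank hypothesis. Since $\mathrm{rank}_T({\mathcal B}^p(l,:,:,:,:))\le(r_1,r_2,r_3,r_4)$, each mode-$d$ unfolding has column space $V_d^{(l)}:=\mathrm{col}\big({\tt unfold}_d({\mathcal B}^p(l,:,:,:,:))\big)$ of dimension at most $r_d$. I would let $W_d$ be a subspace containing all of these, take the columns of ${\bf U}_d^\star$ to be an orthonormal basis of $W_d$, and define the cores by orthogonal projection, ${\mathcal C}_l:={\mathcal B}^p(l,:,:,:,:)\times_1({\bf U}_1^\star)^{\!\top}\times_2\cdots\times_4({\bf U}_4^\star)^{\!\top}$. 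Substituting back, the reconstruction equals ${\mathcal B}^p(l,:,:,:,:)\times_1 P_1\times_2\cdots\times_4 P_4$ with $P_d={\bf U}_d^\star({\bf U}_d^\star)^{\!\top}$ the orthogonal projector onto $W_d$; because every mode-$d$ fiber of slice $l$ lies in $V_d^{(l)}\subseteq W_d$, each $P_d$ acts as the identity and the reconstruction is exact. Equivalently, one may invoke Lemma~\ref{Tucker} on each slice to obtain slice-wise factors and then align them into the common ${\bf U}_d^\star$.

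The main obstacle is precisely this sharing step. The factor matrices ${\bf U}_{\theta_d}$ are common to all $L$ groups, whereas the hypothesis only bounds the rank of each slice \emph{individually}; the shared space $W_d$ must contain the sum $\sum_{l}V_d^{(l)}$, whose dimension can exceed the per-slice bound $r_d$ used to size ${\bf U}_d^\star$. Keeping the factor width at $r_d$ as in the statement therefore hinges on the slices sharing a common $r_d$-dimensional mode-$d$ subspace — an alignment that the nonlocal grouping is designed to promote, since patches pooled into one tensor carry essentially the same background content. I would make this explicit either by adopting the alignment as the operative reading of the low-rank hypothesis, or by setting $W_d=\sum_l V_d^{(l)}$ and noting that $r_d$ is the upper bound attained under this alignment. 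The remaining verifications (that projectors reproduce fibers lying in their own column space, and that finite interpolation is within the reach of a universal approximator) are routine.
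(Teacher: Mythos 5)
Your algebra is correct and, in substance, it is a self-contained version of what the paper outsources to a citation: the paper's proof simply invokes (1) the existence property of the low-rank tensor function factorization (Theorem 2 of the LRTFR paper), viewing each slice ${\mathcal B}^p(l,:,:,:,:)$ as a discrete sampling of a low-rank tensor function, and (2) universal approximation of MLPs. Your two steps mirror these exactly — finite interpolation at the integer indices $i_d\in\{1,\dots,n_d\}$ eliminates the INRs, and the projector construction ${\mathcal C}_l={\mathcal B}^p(l,:,:,:,:)\times_1({\bf U}_1^\star)^{\top}\times_2\cdots\times_4({\bf U}_4^\star)^{\top}$, with exactness following because each mode-$d$ fiber lies in the range of $P_d={\bf U}_d^\star({\bf U}_d^\star)^{\top}$, is precisely the multilinear content hidden inside the cited theorem. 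So the routes agree where they overlap; yours is merely more explicit.

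The real value of your write-up is the sharing obstruction you isolate at the end, and you should know that this is a genuine issue — not with your argument, but with the theorem as stated and with the paper's proof, which never confronts it. The paper factorizes each slice independently; nothing in the cited LRTFR result makes the resulting factor matrices common across $l$, yet the model \eqref{NL-INR} requires ${\bf U}_{\theta_d}$ to be shared by all $L$ groups with width exactly $r_d$. Under the stated per-slice hypothesis the claim is false: take $L=2$, $n_3=n_4=1$, $(r_1,r_2,r_3,r_4)=(1,1,1,1)$, with slice $1$ equal to ${\bf e}_1{\bf e}_1^{\top}$ and slice $2$ equal to ${\bf e}_2{\bf e}_2^{\top}$ (viewed as $n_1\times n_2\times 1\times 1$ tensors); each slice satisfies the rank hypothesis, but any shared-factor representation forces both slices to be scalar multiples of the single rank-one matrix ${\bf u}_1{\bf u}_2^{\top}$, which they are not. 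Hence the theorem needs one of your two patches: either read the hypothesis as a rank bound on the mode-$d$ unfoldings of the aggregated five-way tensor (equivalently, a common $r_d$-dimensional mode-$d$ subspace across all groups — which the nonlocal grouping is designed to promote but does not guarantee), or allow the factor width to grow to $\dim\bigl(\sum_l V_d^{(l)}\bigr)$, which can exceed $r_d$. Flagging this explicitly, rather than silently treating slices one at a time as the paper does, is the correct and more rigorous reading.
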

\begin{proof}
The conjecture follows from the combination of: (1) the existence property of the low-rank tensor function factorization in Theorem 2 of \cite{LRTFR}, by viewing each $\mathcal{B}^p_\Theta(l,:,:,:,:)$ as a discrete tensor sampled on a low-rank tensor function with function rank less than $(r_1,r_2,r_3,r_4)$, and (2) the universal approximation of the INR (i.e., MLP) for representing any functions on the Euclidean space. 
\end{proof}
The existence result justifies the rationality of the nonlocal INR for background modeling in infrared images, showing that the nonlocal INR could fully represent the low-rank background. Moreover, the nonlocal INR implicitly captures the global spatial-temporal correlations in infrared images by the continuous function representation of INRs, thereby enhancing the performance for IRSTD. We establish the theoretical implicit regularization of our model as follows.
\begin{theorem}\label{the_INR_smooth}
Let $f_{\theta_1}(\cdot)$, $f_{\theta_2}(\cdot)$,$f_{\theta_3}(\cdot)$,$f_{\theta_{4}}(\cdot)$ be factor INRs with sine activation function $\sin(\omega \;\cdot)$ and depth $M$. Assume that the $\ell_1$-norm of each core tensor ${\mathcal C}_l$ ($l=1,2,\cdots,L$) and each weight matrix of factor INRs is bounded by $\eta$. Denote the background patch tensor generated by such nonlocal INRs as $\mathcal{B}^p(l,i_1,i_2,i_3,i_4)={\mathcal C}_l\times_1f_{\theta_1}(i_1)\times_2f_{\theta_2}(i_2)\times_3f_{\theta_3}(i_3)\times_4f_{\theta_4}(i_4)$. Then, we have the following smoothness bound that reveals the global spatial-temporal correlation:
\begin{equation*}\label{eq_lip}
			\begin{split}
|\mathcal{B}^p_\Theta(l_1,i_1,\cdots,i_d,\cdots,i_4)-\mathcal{B}^p_\Theta(l_2,i_1,&\cdots,i_d',\cdots,i_4)|\\&\leq\delta_1 |i_d-i_d'| + \delta_2,
			\end{split}
		\end{equation*} 
		where $\delta_1,\delta_2$ are constants.
		The inequality holds for any two nonlocal groups $l_1,l_2$ $\in\{1,2,\cdots,L\}$, dimension $d\in\{1,2,3,4\}$, and tensor indexes $i_d,i_d'$. When $l_1=l_2$, the upper bound reduces to ${\delta_1 |i_d-i_d'|}$ (i.e., $\delta_2$ vanishes).
\end{theorem}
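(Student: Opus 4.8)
The plan is to insert an intermediate term and use the triangle inequality to decouple the two sources of variation: the change of the within-group index $i_d\to i_d'$ and the change of the nonlocal group $l_1\to l_2$. I would write the difference as $(\mathrm I)+(\mathrm{II})$, where $(\mathrm I)=\mathcal{B}^p_\Theta(l_1,\ldots,i_d,\ldots,i_4)-\mathcal{B}^p_\Theta(l_1,\ldots,i_d',\ldots,i_4)$ uses the \emph{same} group but the two indices, and $(\mathrm{II})=\mathcal{B}^p_\Theta(l_1,\ldots,i_d',\ldots,i_4)-\mathcal{B}^p_\Theta(l_2,\ldots,i_d',\ldots,i_4)$ uses the \emph{same} index but the two groups. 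It then suffices to bound $|(\mathrm I)|\le\delta_1|i_d-i_d'|$ and $|(\mathrm{II})|\le\delta_2$. Because the factor matrices $\mathbf{U}_{\theta_d}$ are shared across all groups, the reduction at $l_1=l_2$ is automatic: the two cores entering $(\mathrm{II})$ coincide, so $(\mathrm{II})=0$ and $\delta_2$ drops out.

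Two preliminary facts about the factor INRs drive both estimates. First, each $f_{\theta_d}$ is globally Lipschitz with a finite constant $L_d$: the activation $\sin(\omega\,\cdot)$ has derivative bounded by $\omega$ and each weight matrix has $\ell_1$-norm at most $\eta$, so propagating through the $M$ layers by the chain rule yields $\|f_{\theta_d}(i_d)-f_{\theta_d}(i_d')\|_\infty\le L_d|i_d-i_d'|$ with $L_d$ of order $(\omega\eta)^{M-1}\eta$; this is precisely the global Lipschitz smoothness of sinusoidal INRs established in \cite{LRTFR}. Second, each factor output is uniformly bounded, $\|f_{\theta_d}(i_d)\|_\infty\le B_d$, since every intermediate sine activation lies in $[-1,1]$ and the final linear layer has bounded norm. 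Combined with the hypothesis $\|\mathcal{C}_l\|_{\ell_1}\le\eta$, these supply everything the two contraction bounds require.

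For $(\mathrm I)$, multilinearity of the mode products lets me factor out the single perturbed mode, $(\mathrm I)=\mathcal{C}_{l_1}\times_1 f_{\theta_1}(i_1)\cdots\times_d\bigl(f_{\theta_d}(i_d)-f_{\theta_d}(i_d')\bigr)\cdots\times_4 f_{\theta_4}(i_4)$; bounding this full contraction by the $\ell_1$-norm of the core against the sup-norms of the factor vectors gives $|(\mathrm I)|\le\eta\bigl(\prod_{e\neq d}B_e\bigr)L_d\,|i_d-i_d'|=:\delta_1|i_d-i_d'|$. For $(\mathrm{II})$, the shared factors let me collect the whole difference into the cores, $(\mathrm{II})=(\mathcal{C}_{l_1}-\mathcal{C}_{l_2})\times_1 f_{\theta_1}(i_1)\cdots\times_4 f_{\theta_4}(i_4)$, whence $|(\mathrm{II})|\le\|\mathcal{C}_{l_1}-\mathcal{C}_{l_2}\|_{\ell_1}\prod_e B_e\le 2\eta\prod_e B_e=:\delta_2$. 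The only step needing genuine care is the Lipschitz constant $L_d$ of the deep composed INR---tracking how the $\omega$ from each sine derivative and the $\eta$ from each weight matrix accumulate across the $M$ layers---which I would obtain directly from the smoothness result of \cite{LRTFR}; everything else is routine multilinear bookkeeping, and the vanishing of $\delta_2$ at $l_1=l_2$ follows at once from the shared-factor structure.
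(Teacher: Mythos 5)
Your proposal is correct and takes essentially the same approach as the paper's (supplementary) proof: split the difference by the triangle inequality into a within-group term, controlled via the layer-wise Lipschitz constant of order $\omega^{M-1}\eta^{M}$ of the sine-activated factor INRs (the smoothness property of \cite{LRTFR}), and a cross-group term controlled by $\|\mathcal{C}_{l_1}-\mathcal{C}_{l_2}\|_{\ell_1}\le 2\eta$ times the uniformly bounded factor outputs, with the shared-factor structure forcing that term to vanish when $l_1=l_2$. The multilinear H\"older-type bookkeeping and the resulting constants $\delta_1,\delta_2$ match the intended argument, so there is no gap.
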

The proof of Theorem \ref{the_INR_smooth} is placed in supplementary file. When $d=1,2$, the smoothness bound reveals the spatial smoothness embedded in the model, and when $d=3$, the bound reveals the temporal smoothness. Hence, the global Lipschitz smoothness reveals the spatial-temporal correlations captured by INRs through bounding the differences between spatial-temporal elements of ${\mathcal B}_\Theta^p$ with some constants. This bound is intrinsically related to the Lipchitz continuous nature of INR \cite{LRTFR}. Especially, the nonlocal INR model differentiates between intra- and inter-relationships within nonlocal groups $\{{\mathcal{B}^p_\Theta(l,:,:,:)}\}_{l=1}^L$, i.e., the intra-relationship inside a group (when $l_1=l_2$, $\delta_2$ vanishes) is more pronounced than the inter-relationships between different nonlocal groups (when $l_1\neq l_2$). This theoretical result interprets the ability of nonlocal INRs to simultaneously characterize both nonlocal low-rankness and spatial-temporal correlation of infrared backgrounds in a flexible way, with such characterizations adaptively learned via a parametric nonlocal INR model~\eqref{NL-INR}.

\subsection{3DTV and ADMM Algorithm for IRSTD}
% To enhance robustness for IRSTD, we further introduce a spatial-temporal 3DTV $\left\| \mathcal{B}^p_\Theta \right\|_{\rm 3DTV}=\left\| \nabla _x\mathcal{B} ^p_\Theta \right\| _{\ell_1}+\left\| \nabla _y\mathcal{B} ^p_\Theta \right\| _{\ell_1}+\eta \left\| \nabla _z\mathcal{B} ^p_\Theta \right\| _{\ell_1}$, where $\nabla_x,\nabla_y,\nabla_z$ are first-order difference operators and $\eta$ is a weight parameter. The final optimization problem of our optNL-INR is:
To more faithfully capture spatial-temporal local correlations of the background, we further introduce a spatial-temporal 3DTV regularization into the model:
\begin{equation}\label{ST-3DTV}
\begin{aligned}
    \left\| \mathcal{B}^p_\Theta \right\|_{\rm 3DTV}=\left\| \nabla _x\mathcal{B} ^p_\Theta \right\| _{\ell_1}+\left\| \nabla _y\mathcal{B} ^p_\Theta \right\| _{\ell_1}+\eta \left\| \nabla _z\mathcal{B} ^p_\Theta \right\| _{\ell_1},
\end{aligned}   
\end{equation}
where $\nabla_x,\nabla_y,\nabla_z$ are first-order difference operators and $\eta$ is a weight parameter. The final optimization problem in \eqref{nonlocal_INR_model} of our optNL-INR can be further expressed as:
\begin{equation}\label{patch_RPCA_model_3}
\begin{aligned}
    \underset{\Theta,\mathcal{T} ^p}{\min}\,\,\left\| \mathcal{X} ^p-\mathcal{B} ^p_\Theta-\mathcal{T} ^p \right\| _F^2+\lambda \left\| \mathcal{T} ^p \right\|_{\ell_1}+\phi\left\| \mathcal{B}^p_\Theta \right\|_{\rm 3DTV},
\end{aligned}   
\end{equation}
where $\phi$ is a trade-off parameter. \par
To address the optimization model \eqref{patch_RPCA_model_3}, we employ an auxiliary variable $\mathcal A$ and a Lagrange multiplier $\Lambda$, and transform the global optimization into several subproblems under the ADMM framework:
% , and consider the ADMM iterates:
\begin{equation}\label{ADMM}
	\begin{aligned}
	&\min_{{\mathcal A}}\|{\mathcal X}^p-{\mathcal A}-{\mathcal T}_{t}^p\|_F^2+\frac{\rho_t}{2}\|{\mathcal A}-{\mathcal B}_{\Theta_{t}}^p+{\Lambda}_{t}\|_F^2,\\
	&\min_{\Theta}\frac{\rho_t}{2}\|{\mathcal A}_{t+1}-{\mathcal B}_\Theta^p+\Lambda_{t}\|_F^2+\phi\|{\mathcal B}_\Theta^p\|_{\rm 3DTV},\\
	&\min_{{\mathcal T}^p}\|{\mathcal X}^p-{\mathcal A}_{t+1}-{\mathcal T}^p\|_F^2+\lambda\|{\mathcal T}^p\|_{\ell_1},\\
	&\;{\Lambda}_{t+1}={\Lambda}_t+{\mathcal A}_{t+1}-{\mathcal B}_{\Theta_{t+1}}^p,\;\rho_{t+1}=\kappa\rho_{t},
	\end{aligned}
\end{equation}
where $\kappa>1$ is a constant in the ADMM framework, $t$ denotes the index of iteration number, and $\rho_t$ is a trade-off parameter that evolves with iterations. Here, the ${\mathcal A}$-subproblem consists of squared terms and can be solved by first-order optimal condition. The $\Theta$-subproblem is optimized through utilizing the Adam \cite{Adam} optimizer in each iteration of the ADMM, which is viewed as a plug-and-play denoising subproblem under the ADMM framework. The sparse target ${\mathcal T}^p$-subproblem is employed to accurately separate and extract the target from the background. It can be solved efficiently by the soft threshold shrinkage operator \cite{donoho2002noising}:
\begin{equation}\label{soft_threshold}
\begin{aligned}
    \mathcal{T} ^p_{t+1}=\mathcal{S} _{\frac{\lambda}{2}}\left( \mathcal{X} ^p-\mathcal{A}_{t+1}\right),
\end{aligned}   
\end{equation}
where $\mathcal{S} \left( \cdot \right)$ represents the soft-thresholding shrinkage operator defined as $
\mathcal{S} _{\xi}\left( x \right) =\mathrm{sign}\left( x \right) \cdot \max \left( \left| x \right|-\xi ,0 \right)$. After the optimization of ADMM, we reshape the optimized sparse target ${\mathcal T}^p\in{\mathbb R}^{L\times p\times p\times n_3\times (S+1)}$ to the original STT shape $n_1\times n_2\times n_3$ to obtain the IRSTD result. \par
The computational complexity of our ADMM is $O(4MW^3+Lr)$ at each iteration, where $M$ and $W$ denote the network depth and width, $L$ denotes the number of nonlocal patches, and $r$ denotes the rank.\par 
Under mild assumptions of the bounded denoiser \cite{PnP}, we have the following fixed-point convergence guarantee for the plug-and-play ADMM algorithm (Lemma \ref{lemma_convergence}). Experimental results using relative error (RE) curves on three infrared scenes (as shown in Figure \ref{fig:converge}) demonstrate the numerical convergence behavior of our algorithm.
% \subsection{Loss Function}
\begin{lemma}\label{lemma_convergence}
Assume that the $\Theta$-subproblem in \eqref{ADMM} is bounded $\|{\mathcal B}_{\Theta_{t+1}}^p-({\mathcal A}_{t+1}+{\Lambda}_t)\|_{F}^2\leq{\frac{C}{\rho^t}}$ for a constant $C$. Then the iterates in \eqref{ADMM} admit a fixed-point convergence, i.e., there exist $({\mathcal A}^*,\Theta^*,{\Lambda}^*)$ such that $\|{\mathcal A}_t-{\mathcal A}^*\|_F^2\rightarrow 0$, $\|{\mathcal B}_{\Theta_t}^p-{\mathcal B}_{\Theta^*}^p\|_F^2\rightarrow 0$, and $\|{\Lambda}_t-{\Lambda}^*\|_F^2\rightarrow 0$ as $t\rightarrow \infty$.
\end{lemma}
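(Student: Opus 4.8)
The plan is to recognize \eqref{ADMM} as a plug-and-play ADMM in which the $\Theta$-subproblem acts as a bounded denoiser mapping $\mathcal{A}_{t+1}+\Lambda_t$ to $\mathcal{B}_{\Theta_{t+1}}^p$, and then to exploit the geometric growth $\rho_t=\kappa^t\rho_0$ ($\kappa>1$) to convert the bounded-denoiser hypothesis into \emph{summable} iterate differences, which upgrades the usual fixed-point property into genuine convergence to a limit point.

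First I would establish dual convergence. Rearranging the multiplier update in \eqref{ADMM} gives $\mathcal{B}_{\Theta_{t+1}}^p-(\mathcal{A}_{t+1}+\Lambda_t)=-\Lambda_{t+1}$, so the hypothesis $\|\mathcal{B}_{\Theta_{t+1}}^p-(\mathcal{A}_{t+1}+\Lambda_t)\|_F^2\le C/\rho^t$ becomes exactly $\|\Lambda_{t+1}\|_F^2\le C/\rho^t$. Since $\rho^t$ grows geometrically, this yields $\Lambda_t\to 0$ (so that $\Lambda^*=0$) and, more strongly, $\sum_t\|\Lambda_t\|_F<\infty$. The same rearrangement produces the primal residual $\mathcal{A}_{t+1}-\mathcal{B}_{\Theta_{t+1}}^p=\Lambda_{t+1}-\Lambda_t\to 0$, and in fact $\mathcal{B}_{\Theta_t}^p=\mathcal{A}_t+\Lambda_{t-1}-\Lambda_t$, so $\mathcal{A}_t$ and $\mathcal{B}_{\Theta_t}^p$ are forced to share a common limit once either converges.

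Next I would prove that $\{\mathcal{A}_t\}$ is Cauchy. Because the $\mathcal{A}$-subproblem is quadratic, its first-order optimality condition gives the closed form $\mathcal{A}_{t+1}=c_t(\mathcal{X}^p-\mathcal{T}_t^p)+(1-c_t)(\mathcal{B}_{\Theta_t}^p-\Lambda_t)$ with $c_t=2/(2+\rho_t)$. Substituting $\mathcal{B}_{\Theta_t}^p=\mathcal{A}_t+\Lambda_{t-1}-\Lambda_t$ and simplifying gives $\mathcal{A}_{t+1}-\mathcal{A}_t=c_t(\mathcal{X}^p-\mathcal{A}_t-\mathcal{T}_t^p)+(1-c_t)(\Lambda_{t-1}-2\Lambda_t)$. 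The crucial observation is that the soft-threshold update \eqref{soft_threshold} makes the fidelity residual uniformly bounded: since $|z-\mathcal{S}_{\lambda/2}(z)|\le\lambda/2$ elementwise, we have $\|\mathcal{X}^p-\mathcal{A}_t-\mathcal{T}_t^p\|_F=\|(\mathcal{X}^p-\mathcal{A}_t)-\mathcal{S}_{\lambda/2}(\mathcal{X}^p-\mathcal{A}_t)\|_F\le\tfrac{\lambda}{2}\sqrt{N}$ with $N$ the number of tensor entries, independently of $t$. Hence $\|\mathcal{A}_{t+1}-\mathcal{A}_t\|_F\le c_t\tfrac{\lambda}{2}\sqrt{N}+\|\Lambda_{t-1}\|_F+2\|\Lambda_t\|_F$, and since both $c_t$ and $\|\Lambda_t\|_F$ decay geometrically, the right-hand side is summable. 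Therefore $\{\mathcal{A}_t\}$ is Cauchy and converges to some $\mathcal{A}^*$; then $\mathcal{B}_{\Theta_t}^p\to\mathcal{A}^*=:\mathcal{B}_{\Theta^*}^p$ via the residual estimate, and with $\Lambda_t\to\Lambda^*=0$ all three squared-norm limits follow.

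The main obstacle is precisely this last upgrade. The plug-and-play literature typically certifies only the weaker fixed-point property that consecutive differences vanish, whereas the lemma asserts convergence to an actual limit point; securing the latter requires the summability of $\sum_t\|\mathcal{A}_{t+1}-\mathcal{A}_t\|_F$. That summability rests on two points that must be handled with care: the \emph{constant} bound on the fidelity residual produced by the nonexpansive soft-threshold block (so that the coupling to the extra $\mathcal{T}^p$-block of the three-block scheme does not accumulate across iterations), and the geometric decay supplied jointly by $\kappa>1$ and the bounded-denoiser hypothesis. Were $\rho_t$ to grow only sublinearly, the differences would cease to be summable and one could recover only the weaker statement that consecutive iterates stop changing.
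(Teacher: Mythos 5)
Your proof is correct and takes essentially the same route as the paper, which establishes Lemma~\ref{lemma_convergence} through the bounded-denoiser fixed-point convergence framework for plug-and-play ADMM of \cite{PnP}: the geometric growth $\rho_{t+1}=\kappa\rho_t$ ($\kappa>1$) combined with the bounded $\Theta$-subproblem makes the dual variables and consecutive primal differences geometrically decaying, hence summable, so the iterates are Cauchy and converge to a fixed point. Your working-out of the details for this specific three-block scheme---the identity ${\mathcal B}^p_{\Theta_{t+1}}-({\mathcal A}_{t+1}+\Lambda_t)=-\Lambda_{t+1}$, the closed-form ${\mathcal A}$-update, and the uniform residual bound $\|{\mathcal X}^p-{\mathcal A}_t-{\mathcal T}^p_t\|_F\le\tfrac{\lambda}{2}\sqrt{N}$ supplied by the soft-thresholding step---is sound, with the only cosmetic liberty being that you set ${\mathcal B}^p_{\Theta^*}:={\mathcal A}^*$ by fiat rather than exhibiting a parameter $\Theta^*$ realizing the limit, a point the lemma's statement (like the framework of \cite{PnP}) effectively treats as notation for the limit tensor anyway.
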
%From the time comparisons in Tables 1-2, we can see that our method is efficient as compared with other unsupervised optimization methods.
\begin{table*}[!ht]
\setlength{\tabcolsep}{5pt}
\vspace{0pt}  %%减少表格与上文间距（向上垂直缩进）
\centering   
    \belowrulesep=0.2pt
    \aboverulesep=0.2pt
\caption{Average results on ATR and Anti-UAV datasets. The inference times (s) and number of parameters (M) are reported. \\The best results are \textbf{bolded}, and the second-best are \underline{underlined}.}  
\label{tab:performance_average}     
\begin{tabular}{c|c|c|cccc|cccc|c|c}
    \toprule
    \multirow{2}{*}{\textbf{Scheme}} & \multirow{2}{*}{\textbf{Method}} & \multirow{2}{*}{\textbf{Category}} & \multicolumn{4}{c}{\textbf{(A) ATR dataset}} & \multicolumn{4}{c|}{\textbf{(B) Anti-UAV dataset}}   & \multirow{2}{*}{\textbf{\shortstack{Inference \\ Time (s)}}} & \multirow{2}{*}{\textbf{\shortstack{Params. \\(M)}}}\\
    \cmidrule(lr){4-7} \cmidrule(lr){8-11} 
    & & & $IoU$ & $F_1$ & $P_d$ & $F_a \downarrow$ & $IoU$ & $F_1$ & $P_d$ & $F_a \downarrow$ & & \\
    \midrule
    \multirow{9}{*}{Supervised} 
    &	ACM	\cite{dai2021asymmetric}	&	single-DL	&	14.34 	&	19.88 	&	39.28 	&	8.07 	&	25.49 	&	39.29 	&	54.75 	&	7.70 	&	0.113 	&	0.40 	\\
    &	AGPCNet	\cite{zhang2023attention}	&	single-DL	&	17.80 	&	25.29 	&	40.72 	&	10.28 	&	31.01 	&	45.72 	&	61.75 	&	3.18 	&	0.098 	&	12.36 	\\
    &	RDIAN	\cite{sun2023receptive}	&	single-DL	&	22.25 	&	28.52 	&	48.91 	&	2.32 	&	16.07 	&	24.72 	&	40.13 	&	4.29 	&	0.097 	&	0.22 	\\
    &	PBT	\cite{yang2024pbt}	&	single-DL	&	31.04 	&	40.58 	&	69.69 	&	6.57 	&	28.88 	&	38.58 	&	67.00 	&	83.06 	&	0.132 	&	26.54 	\\
    &	RPCANet	\cite{wu2024rpcanet}	&	single-DL	&	24.78 	&	33.16 	&	53.16 	&	2.26 	&	20.55 	&	30.45 	&	41.50 	&	7.82 	&	0.112 	&	0.68 	\\
    &	SCTransNet	\cite{yuan2024sctransnet}	&	single-DL	&	40.29 	&	51.04 	&	58.75 	&	3.38 	&	38.60 	&	50.46 	&	57.93 	&	5.52 	&	0.093 	&	11.19 	\\
    &	APTNet	\cite{zhang2025aptnet}	&	single-DL	&	46.46 	&	56.35 	&	83.59 	&	4.49 	&	54.65 	&	70.06 	&	84.40 	&	3.06 	&	0.164 	&	5.60 	\\
    &	ILNet	\cite{li2025ilnet}	&	single-DL	&	52.16 	&	62.31 	&	82.66 	&	\underline{1.21}	&	54.76 	&	67.87 	&	78.75 	&	1.91 	&	0.153 	&	4.04 	\\
    &	LMAFormer	\cite{huang2024lmaformer}	&	multi-DL	&	53.70 	&	65.60 	&	84.84 	&	2.30 	&	55.03 	&	65.83 	&	\underline{88.78}	&	0.65 	&	0.382 	&	390.05 	\\
    \midrule 
    \multirow{13}{*}{Unsupervised} 
    &	NTFRA	\cite{kong2021infrared} 	&	single-Opti	&	12.85 	&	20.06 	&	51.72 	&	126.0 	&	15.83 	&	25.62 	&	52.88 	&	13.33 	&	1.692 	&	-	\\
    &	SRWS	\cite{zhang2021infrared} 	&	single-Opti	&	23.95 	&	34.65 	&	56.09 	&	6.86 	&	22.84 	&	36.83 	&	66.25 	&	5.15	&	0.832 	&	-	\\
    &	HiLV-LRSD	\cite{liu2023single}	&	single-Opti	&	26.90 	&	38.54 	&	66.09 	&	10.07 	&	11.85 	&	17.61 	&	52.00 	&	135.3 	&	0.266 	&	-	\\
    &	IMNN-LWEC	\cite{luo2022imnn} 	&	multi-Opti	&	30.46 	&	43.57 	&	53.28 	&	2.59 	&	25.02 	&	37.32 	&	55.27 	&	1.84 	&	2.299 	&	-	\\
    &	MFSTPT	\cite{hu2022infrared}  	&	multi-Opti	&	17.74 	&	28.80 	&	73.03 	&	10.77 	&	23.07 	&	35.52 	&	67.43 	&	4.24 	&	19.656 	&	-	\\
    &	SRSTT	\cite{li2023SRSTT}	&	multi-Opti	&	40.89 	&	53.09 	&	64.00 	&	6.40 	&	40.73 	&	50.69 	&	61.38 	&	4.58 	&	12.815 	&	-	\\
    &	STRL-LBCM	\cite{luo2023strl}	&	multi-Opti	&	38.08 	&	48.71 	&	61.56 	&	1.52 	&	41.50 	&	52.53 	&	45.83 	&	6.93 	&	0.924 	&	-	\\
    &	WASpN-STTTV	\cite{ma2023weighted}	&	multi-Opti	&	53.17 	&	68.20 	&	78.91 	&	2.15 	&	53.60 	&	67.63 	&	78.25 	&	1.50 	&	3.145 	&	-	\\
    &	NFTDGSTV	\cite{liu2023infrared} 	&	multi-Opti	&	42.87 	&	56.24 	&	75.94 	&	5.17 	&	40.42 	&	54.41 	&	77.76 	&	1.65 	&	1.858 	&	-	\\
    &	TCTV	\cite{liu2024infrared}	&	multi-Opti	&	53.84 	&	67.47 	&	82.50 	&	2.62 	&	47.19 	&	62.80 	&	72.88 	&	4.81 	&	2.592 	&	-	\\
    &	CTSTC	\cite{luo2024clustering} 	&	multi-Opti	&	53.83 	&	65.14 	&	82.50 	&	8.16 	&	52.00 	&	67.88 	&	81.00 	&	1.52 	&	0.746 	&	-	\\
    &	3DSTPM	\cite{zhang3DSTPM}	&	multi-DL	&	54.92 	&	68.16 	&	80.53 	&	2.01 	&	49.92 	&	61.91 	&	76.83 	&	2.08 	&	1.465 	&	5.27 	\\
    &	NeurSTT	\cite{wu2025neural}	&	multi-DL	&	\underline{62.25}	&	\underline{71.91}	&	\underline{87.88}	&	1.94 	&	\underline{61.97}	&	\underline{75.53}	&	88.25 	&	\underline{0.51}	&	0.674 	&	0.32 	\\
    \cmidrule{2-13}
    % \rowcolor[gray]{0.92} % 添加浅灰色底色       
    &	optNL-INR(Ours)		&	multi-DL	&	\textbf{68.77}	&	\textbf{80.97}	&	\textbf{92.81}	&	\textbf{0.55}	&	\textbf{69.74}	&	\textbf{81.88}	&	\textbf{92.78}	&	\textbf{0.13}	&	0.635 	&	0.35 	\\
    \bottomrule
\end{tabular}   
\end{table*}

\begin{figure}[t]
    \centering
    \includegraphics[
        width=0.72\linewidth,height=0.4\linewidth 
    ]{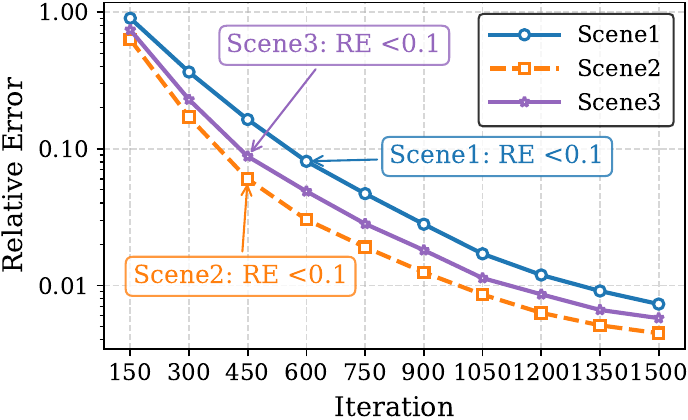}     
    \vspace{0cm}
    \caption{Convergence curves using relative error between $\mathcal{T}_{t-1}^p$ and $\mathcal{T}_t^p$ in the proposed optNL-INR algorithm.}
    \label{fig:converge}
    \vspace{0cm} 
\end{figure}
\begin{figure}[ht]
    \centering
    \includegraphics[
        width=0.9\linewidth
        % trim=0 3.5mm 0 2.5mm, % 裁剪：左 下 右 上（单位可省略）
        % clip,           % 启用裁剪
        % keepaspectratio % 保持宽高比
    ]{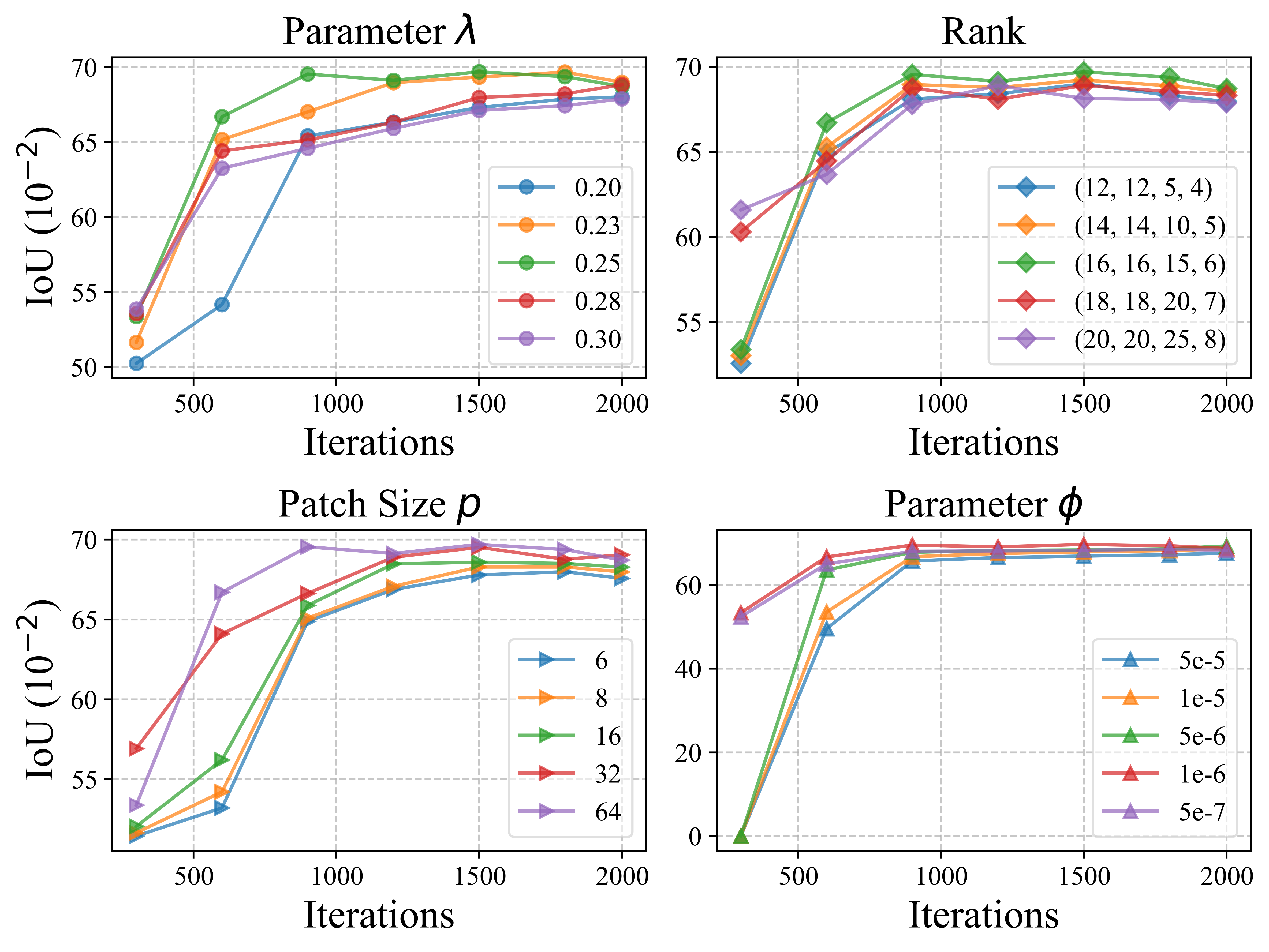}       % 图片文件名需无特殊字符
    \vspace{-0.4cm}
    \caption{$IoU$ curves w.r.t. hyperparameters in optNL-INR.}
    \label{fig:params}
    \vspace{-0.6cm} % 关键调整：上移标题3pt（缩小与下方文字的间距）
\end{figure}
\begin{figure}[t]
    \centering
    \includegraphics[
        width=\linewidth, 
    ]{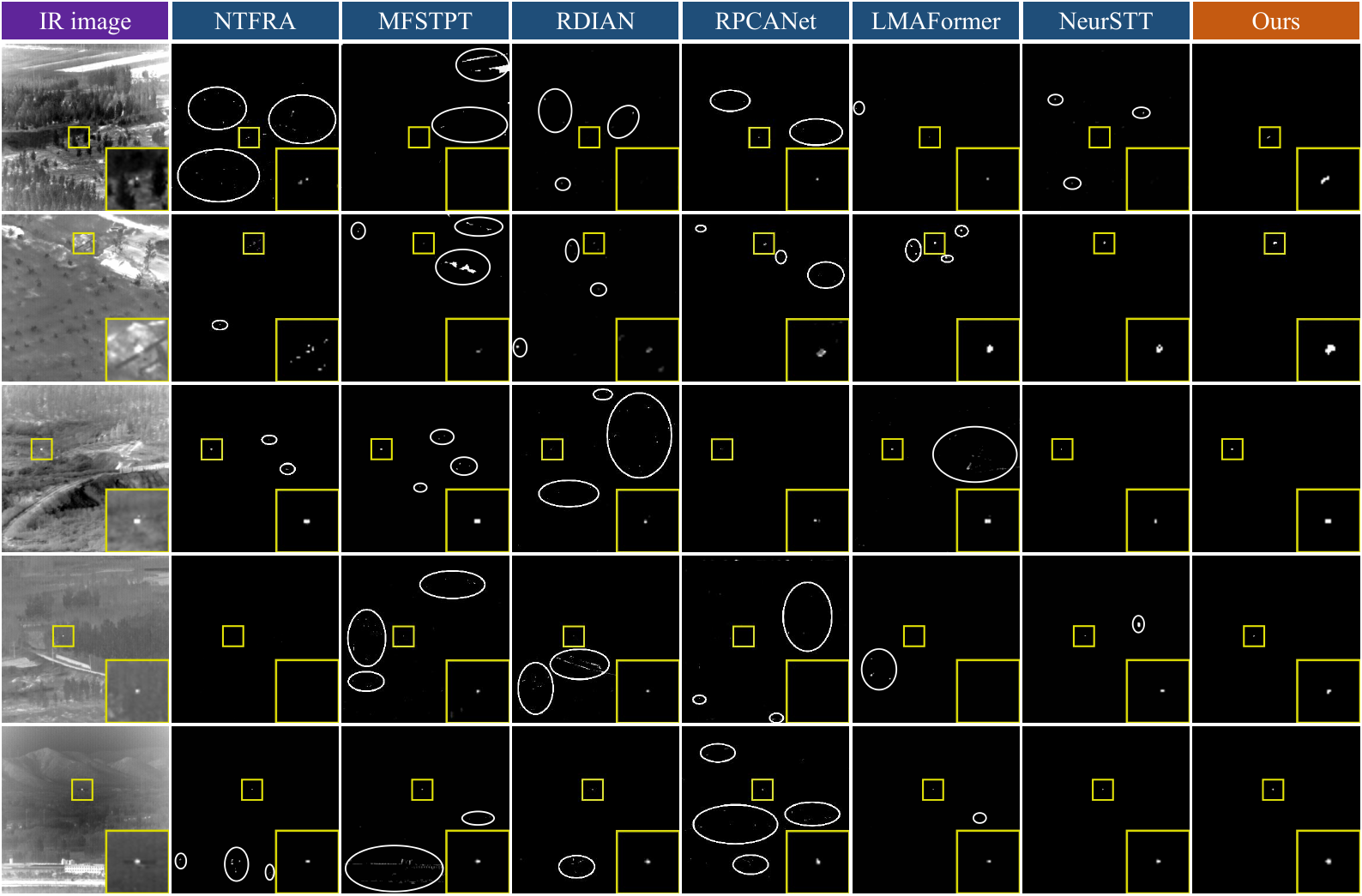}\vspace{-0.1cm}       % 图片文件名需无特殊字符
    \caption{Visual results for scenes A1-A3, B1-B3. Target region (yellow box) and false alarm (white circle) are marked.}
    \label{fig:visual_results}
    \vspace{-0.5cm} % 关键调整：上移标题3pt（缩小与下方文字的间距）
\end{figure}

\section{Experiments}
{\bf Settings} We conduct experiments on two public datasets for time-continuous IRSTD: (A) ATR\footnote{\url{http://www.sciencedb.cn/dataSet/handle/902}} (8 scenes) and (B) Anti-UAV\footnote{\url{https://codalab.lisn.upsaclay.fr/competitions/21688}} (10 scenes), with details in supplementary file. We use four evaluation metrics: pixel-level intersection over union ($IoU(10^{-2})$, with Tr=40\%), F-measure ($F_1(10^{-2})$), target-level detection probability ($P_d(10^{-2})$), and false alarm rate ($F_a(10^{-5})$). Detailed model implementation details are comprehensively provided in supplementary file Table 1.\\
{\bf{Convergence Analyses}} The relative error curves (the $F$-norm error between the target result $\mathcal{T}_t^p$ in the current iteration and $\mathcal{T}_{t-1}^p$ in the previous iteration) of our algorithm for three scenes are shown in Fig. \ref{fig:converge}. The RE converges to zero stably, validating the convergence behavior of our method.\\
{\bf Parameter {Sensitivity} Analyses} We test critical hyperparameters in optNL-INR and analyze the variation of average $IoU$, as shown in Fig. \ref{fig:params}. Our method is relatively robust to these parameters, including the trade-off parameters $\lambda,\phi$, the Tucker rank $(r_1,r_2,r_3,r_4)$, and the patch size $p$. More parameter analyses are provided in supplementary file.

\subsection{Comparison to State-of-the-Art Methods}
We compare the proposed optNL-INR with 9 supervised and 13 unsupervised state-of-the-art IRSTD methods to demonstrate the effectiveness (please see Table \ref{tab:performance_average} for detailed comparison methods and corresponding citations).
%: The supervised methods include {ACM}, AGPCNet, RDIAN, PBT, RPCANet, SCTransNet, APTNet, ILNet, LMAFormer and unsupervised methods include {NRAM}, {NOLC}, {PSTNN}, NTFRA, SRWS, ASTTV-NTLA, IMNN-LWEC, MFSTPT, WASpN-STTTV, NFTDGSTV, TCTV, CTSTC, NeurSTT. 
Hyperparameter settings for all methods are comprehensively provided in the supplementary file. We follow the hyperparameters in the source code and fine-tune them for optimal results.

{\bf Quantitative Results} The average quantitative results on two datasets are shown in Table \ref{tab:performance_average} (inference time is reported per frame). The proposed unsupervised optNL-INR method consistently delivers superior average results for two datasets across different evaluation metrics. Especially, our method achieves the best $IoU$ of 68.77\% and 69.74\% for ATR and Anti-UAV, outperforming the second-best by 6.52\% and 7.77\%. Multi-frame optimization methods generally outperform single-frame ones, highlighting the effectiveness of temporal information in enhancing detection performance. Single-frame deep learning methods show limited performance for the spatial-temporal data in the multi-frame datasets. This suggests that limited training sample diversity in sequential images hinders feature learning, even for supervised methods. In contrast, the multi-frame deep method NeurSTT yields impressive results in several scenes, while our proposed method delivers superior and more robust results. Overall, the proposed unsupervised optNL-INR demonstrates strong robustness by enhancing detection accuracy. While supervised deep learning methods offer faster inference via pre-trained models, our unsupervised method still remains competitive, processing at around 0.6s per frame compared to other optimization-based baselines.
%Moreover, our method achieves the best results in 11 scenes for $IoU$ and $F_1$, with $IoU$ reaching 83.81, 83.17, and 87.47 in A2, A4, and B3, and significant improvements in A3, B1, and B4, outperforming the second-best by 31.62, 16.80, and 23.76. Although it ranks second in B5, $IoU$ and $F_1$ are only 0.48 and 0.29 lower than the best. For $P_d$ and $F_a$, our method also leads in 11 scenes, with $P_d$ reaching 100 in A4, B2, B3, and B5, and $F_a$ reaches 0 in B6 and 0.02 in A2, A3, and A6. 
\\{\bf Visual Results} Fig. \ref{fig:visual_results} presents six typical scenes for representative methods (from top to bottom present A1-A3 and B1-B3). Our method shows superior detection accuracy and robustness. For instance, RDIAN fails to detect targets in A1, and B1, while MFSTPT and NeurSTT also exhibit target loss in A1. MFSTPT generates predictions with background clutter in A1, A2, and B3, while RDIAN and LMAFormer show weak noise suppression in A3. In contrast, our method performs robustly in both target detection and noise suppression, and provides more accurate target shape predictions. In B1 and B2, MFSTPT produces fragmented target shapes, while LMAFormer’s predictions still deviate from the actual contours. Our method captures finer target details, as seen in B1 and B2, where the predicted shape is more precise. Additional visual results are provided in supplementary file.

\subsection{Ablation Study}
We conduct ablation for two key modules in our optNL-INR: the motion enhancement and the nonlocal INR. When nonlocal INR is disabled, we set the nonlocal patch size to the full image size, and our method reduces to a pure global low-rank-based approach. The results are shown in Table \ref{tab:component_Ablation}. Both components contribute to notable improvements. The nonlocal INR increases the $IoU$ from 66.42\% to 68.50\% and $F_1$ from 78.51\% to 80.16\%. This indicates that the nonlocal similarity improves background consistency and suppresses noise, improving the saliency of small targets. The motion enhancement module further increases $P_d$ from 88.14\% to 92.75\%, demonstrating that the motion enhancement using optical flow effectively mitigates moving target loss. In Fig. \ref{fig:optical_fused}, we demonstrate the effectiveness of the proposed dynamic multi-frame fusion strategy in the motion enhancement, where the motion confidence weight $\alpha$ is calculated dynamically for each frame. Compared to using no fusion or fixed fusion strategy (fix $\alpha=0.2$), the dynamic fusion approach robustly tracks temporal motion changes, leading to enhanced motion estimation and improved results.
\begin{table}[t]
\renewcommand{\arraystretch}{0.8}  %压缩行高
\setlength{\tabcolsep}{6pt}   %列间距
    \vspace{0pt}  %%标题与表格主体间距缩进     
    \centering  
    \caption{Ablation results for key components in our proposed optNL-INR (motion enhancement and nonlocal INR).}
    \label{tab:component_Ablation}
    \vspace{0pt}  %%表格与下文的间距
    \begin{tabular}{cccccc}
        \toprule
        \multicolumn{2}{c}{\textbf{Module}} & \multicolumn{4}{c}{\textbf{Average Metrics}} \\
        \cmidrule(lr){1-2} \cmidrule(lr){3-6} 
        Motion & Nonlocal & $IoU$ & $F_1$ & $P_d$ & $F_a \downarrow$ \\
        \midrule
        ×	&	×	&	66.42 	&	78.51 	&	87.27 	&	\underline{0.45} 	\\
        ×	&	\checkmark	& \underline{68.50}	& \underline{80.16}	& \underline{88.14}	& 0.49	\\
        \checkmark	&	\checkmark	& \textbf{69.03} 	& \textbf{81.38} 	& \textbf{92.75} 	& \textbf{0.28} 	\\
        \bottomrule
    \end{tabular}      
\end{table}
% \fi
% \iffalse
\begin{figure}[t]
    \centering
    \includegraphics[
        width=0.92\linewidth
    ]{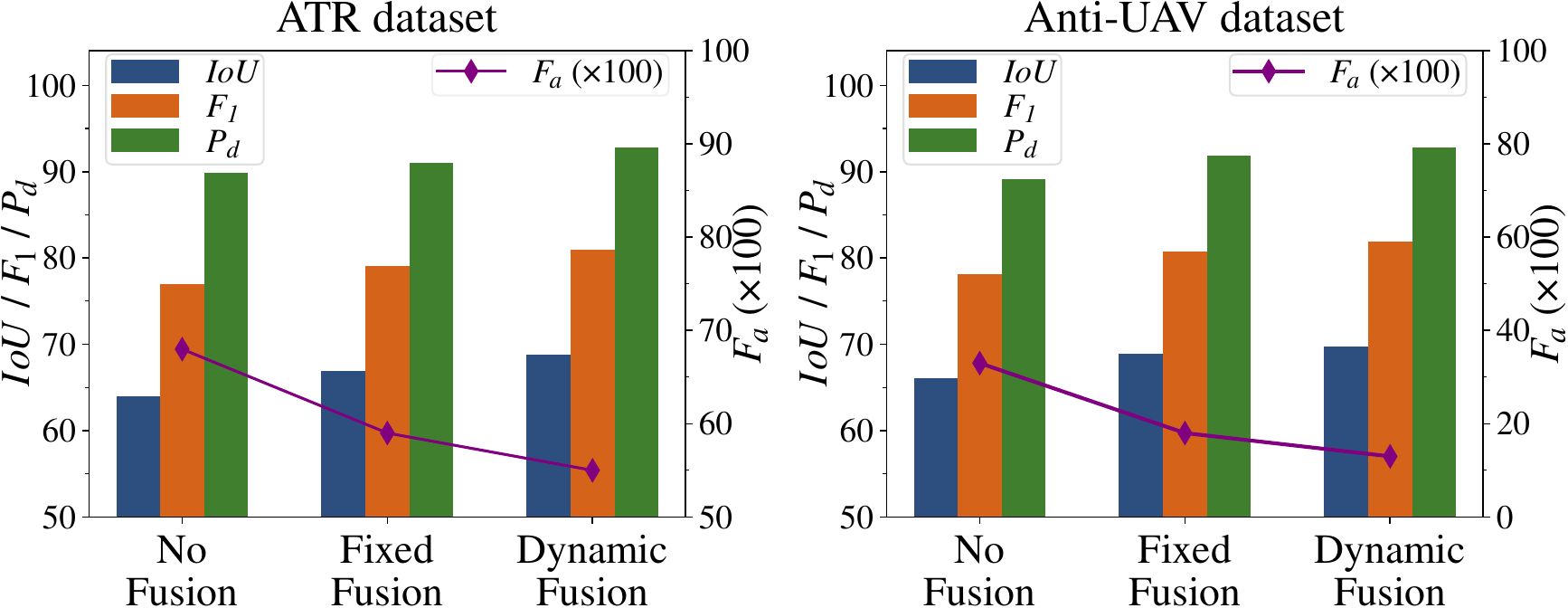}\vspace{-5pt}       % 图片文件名需无特殊字符
    \caption{Ablation results for the dynamic multi-frame fusion strategy in the motion estimation.}% in terms of $IoU\left( 10^{-2} \right) $, $F_1\left( 10^{-2} \right) $, $P_d\left( 10^{-2} \right) $ and $F_a\left( 10^{-5} \right) $.}
    \label{fig:optical_fused}
    \vspace{-0.5cm} % 关键调整：上移标题3pt（缩小与下方文字的间距）
\end{figure}
% \fi

\section{Conclusion}
% \section{Conclusions and Supplementary File Info.}
We proposed an unsupervised motion-enhanced nonlocal similarity implicit neural representation model, termed optNL-INR, for infrared dynamic background estimation and small moving targets detection. The motion estimation better detects small targets, and the dynamic multi-frame fusion strategy improves robustness of motion saliency estimation. The nonlocal INR model effectively captures both the nonlocal low-rankness and the spatial-temporal correlations of the background tensor, thus achieving more accurate target-background separation. Extensive theoretical and experimental analyses have demonstrated the effectiveness and superiority of the proposed optNL-INR method for IRSTD.

%\newpage

\bibliographystyle{ieeetr}  
\bibliography{optNL-INR}

\begin{thebibliography}{10}

\bibitem{wu2022srcanet}
P.~Wu, H.~Huang, H.~Qian, S.~Su, B.~Sun, and Z.~Zuo, ``Srcanet: Stacked residual coordinate attention network for infrared ship detection,'' {\em IEEE Transactions on Geoscience and Remote Sensing}, vol.~60, pp.~1--14, 2022.

\bibitem{yan2023stdmanet}
P.~Yan, R.~Hou, X.~Duan, C.~Yue, X.~Wang, and X.~Cao, ``Stdmanet: Spatio-temporal differential multiscale attention network for small moving infrared target detection,'' {\em IEEE Transactions on Geoscience and Remote Sensing}, vol.~61, pp.~1--16, 2023.

\bibitem{gao2013infrared}
C.~Gao, D.~Meng, Y.~Yang, Y.~Wang, X.~Zhou, and A.~G. Hauptmann, ``Infrared patch-image model for small target detection in a single image,'' {\em IEEE Transactions on Image Processing}, vol.~22, no.~12, pp.~4996--5009, 2013.

\bibitem{luo2022imnn}
Y.~Luo, X.~Li, S.~Chen, C.~Xia, and L.~Zhao, ``Imnn-lwec: A novel infrared small target detection based on spatial-temporal tensor model,'' {\em IEEE Transactions on Geoscience and Remote Sensing}, vol.~60, pp.~1--22, 2022.

\bibitem{ma2023weighted}
D.~Ma, L.~Dong, M.~Zhang, R.~Gao, and W.~Xu, ``Weighted adaptive schatten p-norm and transpose variability model for infrared maritime small target detection,'' {\em IEEE Transactions on Geoscience and Remote Sensing}, 2023.

\bibitem{liu2024infrared}
P.~Liu, J.~Peng, H.~Wang, D.~Hong, and X.~Cao, ``Infrared small target detection via joint low rankness and local smoothness prior,'' {\em IEEE Transactions on Geoscience and Remote Sensing}, 2024.

\bibitem{luo2024clustering}
Y.~Luo, X.~Li, J.~Wang, and S.~Chen, ``Clustering and tracking-guided infrared spatial-temporal small target detection,'' {\em IEEE Transactions on Geoscience and Remote Sensing}, 2024.

\bibitem{dai2021attentional}
Y.~Dai, Y.~Wu, F.~Zhou, and K.~Barnard, ``Attentional local contrast networks for infrared small target detection,'' {\em IEEE Transactions on Geoscience and Remote Sensing}, vol.~59, no.~11, pp.~9813--9824, 2021.

\bibitem{zhang2024irprunedet}
M.~Zhang, H.~Yang, J.~Guo, Y.~Li, X.~Gao, and J.~Zhang, ``Irprunedet: Efficient infrared small target detection via wavelet structure-regularized soft channel pruning,'' in {\em Proceedings of the AAAI Conference on Artificial Intelligence}, vol.~38, pp.~7224--7232, 2024.

\bibitem{chen2024tci}
T.~Chen, Z.~Tan, Q.~Chu, Y.~Wu, B.~Liu, and N.~Yu, ``Tci-former: Thermal conduction-inspired transformer for infrared small target detection,'' in {\em Proceedings of the AAAI Conference on Artificial Intelligence}, vol.~38, pp.~1201--1209, 2024.

\bibitem{zhang2025irmamba}
M.~Zhang, X.~Li, F.~Gao, and J.~Guo, ``Irmamba: Pixel difference mamba with layer restoration for infrared small target detection,'' in {\em Proceedings of the AAAI Conference on Artificial Intelligence}, vol.~39, pp.~10003--10011, 2025.

\bibitem{yang2025pinwheel}
J.~Yang, S.~Liu, J.~Wu, X.~Su, N.~Hai, and X.~Huang, ``Pinwheel-shaped convolution and scale-based dynamic loss for infrared small target detection,'' in {\em Proceedings of the AAAI Conference on Artificial Intelligence}, vol.~39, pp.~9202--9210, 2025.

\bibitem{RPCANet}
F.~Wu, T.~Zhang, L.~Li, Y.~Huang, and Z.~Peng, ``Rpcanet: Deep unfolding rpca based infrared small target detection,'' in {\em 2024 IEEE/CVF Winter Conference on Applications of Computer Vision (WACV)}, pp.~4797--4806, 2024.

\bibitem{zhang3DSTPM}
Z.~Zhang, P.~Gao, S.~Ji, X.~Wang, and P.~Zhang, ``Infrared small target detection combining deep spatial--temporal prior with traditional priors,'' {\em IEEE Transactions on Geoscience and Remote Sensing}, vol.~61, pp.~1--18, 2023.

\bibitem{NeRF}
B.~Mildenhall, P.~P. Srinivasan, M.~Tancik, J.~T. Barron, R.~Ramamoorthi, and R.~Ng, ``{NeRF}: Representing scenes as neural radiance fields for view synthesis,'' {\em Communications of the ACM}, vol.~65, no.~1, pp.~99--106, 2021.

\bibitem{SIREN}
V.~Sitzmann, J.~Martel, A.~Bergman, D.~Lindell, and G.~Wetzstein, ``Implicit neural representations with periodic activation functions,'' {\em Advances in Neural Information Processing Systems}, vol.~33, pp.~7462--7473, 2020.

\bibitem{LRTFR}
Y.~Luo, X.~Zhao, Z.~Li, M.~K. Ng, and D.~Meng, ``Low-rank tensor function representation for multi-dimensional data recovery,'' {\em IEEE Transactions on Pattern Analysis and Machine Intelligence}, vol.~46, no.~5, pp.~3351--3369, 2024.

\bibitem{2021Attentional}
Y.~Dai, Y.~Wu, F.~Zhou, and K.~Barnard, ``Attentional local contrast networks for infrared small target detection,'' {\em IEEE Transactions on Geoscience and Remote Sensing}, vol.~PP, no.~99, pp.~1--12, 2021.

\bibitem{wang2022interior}
K.~Wang, S.~Du, C.~Liu, and Z.~Cao, ``Interior attention-aware network for infrared small target detection,'' {\em IEEE Transactions on Geoscience and Remote Sensing}, vol.~60, pp.~1--13, 2022.

\bibitem{li2023direction}
R.~Li, W.~An, C.~Xiao, B.~Li, Y.~Wang, M.~Li, and Y.~Guo, ``Direction-coded temporal u-shape module for multiframe infrared small target detection,'' {\em IEEE Transactions on Neural Networks and Learning Systems}, 2023.

\bibitem{zhong2024csan}
Y.~Zhong, Z.~Shi, Y.~Zhang, Y.~Zhang, and H.~Li, ``Csan-unet: Channel spatial attention nested unet for infrared small target detection,'' {\em Remote Sensing}, vol.~16, no.~11, p.~1894, 2024.

\bibitem{zhu2025towards}
Y.~Zhu, Y.~Ma, F.~Fan, J.~Huang, Y.~Yao, X.~Zhou, and R.~Huang, ``Towards robust infrared small target detection via frequency and spatial feature fusion,'' {\em IEEE Transactions on Geoscience and Remote Sensing}, 2025.

\bibitem{liu2025spatial}
Y.~Liu, B.~Tu, B.~Liu, Y.~He, J.~Li, and A.~Plaza, ``Spatial frequency domain transformation for infrared small target detection,'' {\em IEEE Transactions on Geoscience and Remote Sensing}, 2025.

\bibitem{zhang2022rkformer}
M.~Zhang, H.~Bai, J.~Zhang, R.~Zhang, C.~Wang, J.~Guo, and X.~Gao, ``Rkformer: Runge-kutta transformer with random-connection attention for infrared small target detection,'' in {\em Proceedings of the 30th ACM International Conference on Multimedia}, pp.~1730--1738, 2022.

\bibitem{yang2024pbt}
H.~Yang, T.~Mu, Z.~Dong, Z.~Zhang, B.~Wang, W.~Ke, Q.~Yang, and Z.~He, ``Pbt: Progressive background-aware transformer for infrared small target detection,'' {\em IEEE Transactions on Geoscience and Remote Sensing}, 2024.

\bibitem{yuan2024sctransnet}
S.~Yuan, H.~Qin, X.~Yan, N.~Akhtar, and A.~Mian, ``Sctransnet: Spatial-channel cross transformer network for infrared small target detection,'' {\em IEEE Transactions on Geoscience and Remote Sensing}, 2024.

\bibitem{Deep-LSP-Net}
X.~Zhou, P.~Li, Y.~Zhang, X.~Lu, and Y.~Hu, ``Deep low-rank and sparse patch-image network for infrared dim and small target detection,'' {\em IEEE Transactions on Geoscience and Remote Sensing}, vol.~61, pp.~1--14, 2023.

\bibitem{DUSRNet}
L.~Deng, Q.~Liu, G.~Xu, and H.~Zhu, ``Dusrnet: Deep unfolding sparse-regularized network for infrared small target detection,'' {\em Infrared Physics \& Technology}, vol.~146, p.~105727, 2025.

\bibitem{rivest1996detection}
J.-F. Rivest and R.~Fortin, ``Detection of dim targets in digital infrared imagery by morphological image processing,'' {\em Optical Engineering}, vol.~35, no.~7, pp.~1886--1893, 1996.

\bibitem{han2020infrared}
J.~Han, S.~Moradi, I.~Faramarzi, H.~Zhang, Q.~Zhao, X.~Zhang, and N.~Li, ``Infrared small target detection based on the weighted strengthened local contrast measure,'' {\em IEEE Geoscience and Remote Sensing Letters}, vol.~18, no.~9, pp.~1670--1674, 2020.

\bibitem{lu2023infrared}
Z.~Lu, Z.~Huang, Q.~Song, H.~Ni, and K.~Bai, ``Infrared small target detection based on joint local contrast measures,'' {\em Optik}, vol.~273, p.~170437, 2023.

\bibitem{dai2017reweighted}
Y.~Dai and Y.~Wu, ``Reweighted infrared patch-tensor model with both nonlocal and local priors for single-frame small target detection,'' {\em IEEE Journal of Selected Topics in Applied Earth Observations and Remote Sensing}, vol.~10, no.~8, pp.~3752--3767, 2017.

\bibitem{kong2021infrared}
X.~Kong, C.~Yang, S.~Cao, C.~Li, and Z.~Peng, ``Infrared small target detection via nonconvex tensor fibered rank approximation,'' {\em IEEE Transactions on Geoscience and Remote Sensing}, vol.~60, pp.~1--21, 2021.

\bibitem{liu2023infrared}
T.~Liu, J.~Yang, B.~Li, Y.~Wang, and W.~An, ``Infrared small target detection via nonconvex tensor tucker decomposition with factor prior,'' {\em IEEE Transactions on Geoscience and Remote Sensing}, vol.~61, pp.~1--17, 2023.

\bibitem{wu2025neural}
F.~Wu, S.~Liu, H.~Wang, B.~Tao, J.~Luo, and Z.~Peng, ``Neural spatial-temporal tensor representation for infrared small target detection,'' {\em Pattern Recognition}, p.~111929, 2025.

\bibitem{Tucker}
T.~G. Kolda and B.~W. Bader, ``Tensor decompositions and applications,'' {\em SIAM Review}, vol.~51, no.~3, pp.~455--500, 2009.

\bibitem{farneback2003two}
G.~Farneb{\"a}ck, ``Two-frame motion estimation based on polynomial expansion,'' in {\em Image Analysis: 13th Scandinavian Conference, SCIA 2003 Halmstad, Sweden, June 29--July 2, 2003 Proceedings 13}, pp.~363--370, Springer, 2003.

\bibitem{wang2021infrared}
G.~Wang, B.~Tao, X.~Kong, and Z.~Peng, ``Infrared small target detection using nonoverlapping patch spatial-temporal tensor factorization with capped nuclear norm regularization,'' {\em IEEE Transactions on Geoscience and Remote Sensing}, vol.~60, pp.~1--17, 2021.

\bibitem{Adam}
Kingma, ``Adam: A method for stochastic optimization,'' in {\em International Conference on Learning Representations}, 2015.

\bibitem{donoho2002noising}
D.~L. Donoho, ``De-noising by soft-thresholding,'' {\em IEEE Transactions on Information Theory}, vol.~41, no.~3, pp.~613--627, 2002.

\bibitem{PnP}
S.~H. Chan, X.~Wang, and O.~A. Elgendy, ``Plug-and-play admm for image restoration: fixed-point convergence and applications,'' {\em IEEE Transactions on Computational Imaging}, vol.~3, no.~1, pp.~84--98, 2017.

\bibitem{dai2021asymmetric}
Y.~Dai, Y.~Wu, F.~Zhou, and K.~Barnard, ``Asymmetric contextual modulation for infrared small target detection,'' in {\em Proceedings of the IEEE/CVF Winter Conference on Applications of Computer Vision}, pp.~950--959, 2021.

\bibitem{zhang2023attention}
T.~Zhang, L.~Li, S.~Cao, T.~Pu, and Z.~Peng, ``Attention-guided pyramid context networks for detecting infrared small target under complex background,'' {\em IEEE Transactions on Aerospace and Electronic Systems}, vol.~59, no.~4, pp.~4250--4261, 2023.

\bibitem{sun2023receptive}
H.~Sun, J.~Bai, F.~Yang, and X.~Bai, ``Receptive-field and direction induced attention network for infrared dim small target detection with a large-scale dataset irdst,'' {\em IEEE Transactions on Geoscience and Remote Sensing}, vol.~61, pp.~1--13, 2023.

\bibitem{wu2024rpcanet}
F.~Wu, T.~Zhang, L.~Li, Y.~Huang, and Z.~Peng, ``Rpcanet: Deep unfolding rpca based infrared small target detection,'' in {\em Proceedings of the IEEE/CVF Winter Conference on Applications of Computer Vision}, pp.~4809--4818, 2024.

\bibitem{zhang2025aptnet}
Y.~Zhang, W.~Bao, W.~Wan, Q.~Xiao, Y.~Tang, X.~Zou, L.~Huang, K.~Zhong, and Y.~Lan, ``Aptnet: Adaptive partial transformer network for infrared small target detection,'' {\em IEEE Sensors Journal}, 2025.

\bibitem{li2025ilnet}
H.~Li, J.~Yang, R.~Wang, and Y.~Xu, ``Ilnet: Low-level matters for salient infrared small target detection,'' {\em IEEE Transactions on Aerospace and Electronic Systems}, 2025.

\bibitem{huang2024lmaformer}
Y.~Huang, X.~Zhi, J.~Hu, L.~Yu, Q.~Han, W.~Chen, and W.~Zhang, ``Lmaformer: Local motion aware transformer for small moving infrared target detection,'' {\em IEEE Transactions on Geoscience and Remote Sensing}, 2024.

\bibitem{zhang2021infrared}
T.~Zhang, Z.~Peng, H.~Wu, Y.~He, C.~Li, and C.~Yang, ``Infrared small target detection via self-regularized weighted sparse model,'' {\em Neurocomputing}, vol.~420, pp.~124--148, 2021.

\bibitem{liu2023single}
Y.~Liu, X.~Liu, X.~Hao, W.~Tang, S.~Zhang, and T.~Lei, ``Single-frame infrared small target detection by high local variance, low-rank and sparse decomposition,'' {\em IEEE transactions on geoscience and remote sensing}, vol.~61, pp.~1--17, 2023.

\bibitem{hu2022infrared}
Y.~Hu, Y.~Ma, Z.~Pan, and Y.~Liu, ``Infrared dim and small target detection from complex scenes via multi-frame spatial-temporal patch-tensor model,'' {\em Remote Sensing}, vol.~14, no.~9, p.~2234, 2022.

\bibitem{li2023SRSTT}
J.~Li, P.~Zhang, L.~Zhang, and Z.~Zhang, ``Sparse regularization-based spatial--temporal twist tensor model for infrared small target detection,'' {\em IEEE Transactions on Geoscience and Remote Sensing}, vol.~61, pp.~1--17, 2023.

\bibitem{luo2023strl}
Y.~Luo, X.~Li, Y.~Yan, and C.~Xia, ``Spatial-temporal tensor representation learning with priors for infrared small target detection,'' {\em IEEE Transactions on Aerospace and Electronic Systems}, vol.~59, no.~6, pp.~9598--9620, 2023.

\end{thebibliography}

\end{document}